\def \R{\mathbb{R}}
\def \calA{\mathcal{A}}
\def \calN{\mathcal{N}}
\def \E{\mathbb{E}}
\DeclareMathOperator*{\tr}{tr}
\newtheorem{definition}{Definition}
\newtheorem{example}{Example}
\newtheorem{claim}{Claim}
\newtheorem{lemma}{Lemma}
\definecolor{darkblue}{rgb}{0.0,0.0,0.65}
\definecolor{darkred}{rgb}{0.68,0.05,0.0}
\definecolor{darkgreen}{rgb}{0.0,0.29,0.29}
\definecolor{darkpurple}{rgb}{0.47,0.09,0.29}
\title{A Universal Class of Sharpness-Aware Minimization Algorithms}
\author{Behrooz Tahmasebi\\MIT CSAIL\\ \texttt{bzt@mit.edu} \and Ashkan Soleymani\\MIT LIDS\\ \texttt{ashkanso@mit.edu} \and Dara Bahri\\ Google DeepMind \\ \texttt{dbahri@google.com} \and Stefanie Jegelka\\ TU Munich and MIT CSAIL\\ \texttt{stefje@mit.edu} \and Patrick Jaillet\\MIT LIDS\\ \texttt{jaillet@mit.edu}}
\date{}
\begin{document}

\maketitle

\begin{abstract}
Recently, there has been a surge in interest in developing optimization algorithms for overparameterized models as achieving generalization is believed to require algorithms with suitable biases. This interest centers on minimizing sharpness  of  the original loss function; the Sharpness-Aware Minimization (SAM) algorithm has proven effective.  However, most literature only considers a few sharpness measures, such as the maximum eigenvalue or trace of the training loss Hessian, which may not yield meaningful insights for non-convex optimization scenarios like neural networks. Additionally, many sharpness measures are sensitive to parameter invariances in neural networks, magnifying significantly under rescaling parameters.
Motivated by these challenges, we introduce a new class of sharpness measures in this paper, leading to new sharpness-aware objective functions. We prove that these measures are \textit{universally expressive}, allowing any function of the training loss Hessian matrix to be represented by appropriate hyperparameters. Furthermore, we show that the proposed objective functions explicitly bias towards minimizing their corresponding sharpness measures, and how they allow meaningful applications to 
models with parameter invariances (such as scale-invariances). Finally, as instances of our proposed general framework, we present \textit{Frob-SAM} and \textit{Det-SAM}, which are  specifically designed to minimize the Frobenius norm and the determinant of the Hessian of the training loss, respectively. We also demonstrate the advantages of our general framework through extensive  experiments.
\end{abstract}

\section{Introduction}

Understanding the generalization capabilities of overparameterized networks is a fundamental, yet unsolved challenge in deep learning. It is postulated that achieving near-zero training loss alone may be insufficient as there exist many instances where global minima fail to exhibit satisfactory generalization performance. To this end, a dominant observation asserts that the characteristics of the loss landscape play a pivotal role in determining which parameters have low training loss while also exhibiting generalization capabilities.

A recently proposed approach to consider the geometric aspects of the loss landscape, with the aim of achieving generalization, entails the avoidance of sharp minima. For example, the celebrated Sharpness-Aware Minimization (SAM) algorithm has shown enhancements in generalization across many practical tasks \citep{foret2020sharpness}. While the concept of sharpness lacks a precise definition in a general sense, people often introduce various measures to quantify it in practice~\citep{dinh2017sharp}. Many sharpness measures in the literature rely on the second-order derivative characteristics of the training loss function, such as the trace or the operator norm of the Hessian matrix~\citep{chaudhari2019entropy,keskar2016large}.

Nevertheless, traditional methodologies for quantifying sharpness may not suffice to ensure generalization, given the intricate geometry of the loss landscape, which may necessitate different regularization techniques. Moreover, many existing sharpness measures fail to encapsulate the genuine essence of sharpness in deep neural networks because the Hessian matrix no longer maintains positive semi-definiteness.
Furthermore, neural networks exhibit parameter invariances, wherein different parameterizations can yield identical functions — such as scaling invariances in ReLU networks. Consequently, an effective measure of sharpness should remain invariant in the face of such parameter variations. Unfortunately, conventional approaches for quantifying sharpness frequently fall short in addressing this phenomenon.

Therefore, a fundamental question arises: how can one succinctly represent all measures of sharpness within a compact parameterized framework that also enables meaningful applications to models with parameter invariances? This question holds significance in applications as it allows \textit{learning/designing the regularization} in cases where information about the geometry of the loss landscape or parameter invariances is provided, either empirically or through assumption. To the best of our knowledge, this question has remained fairly unexplored in the deep learning literature. 

In this paper, we characterize \textit{all} sharpness measures (i.e., functions of the Hessian of the training loss) through an average-based parameterized representation.  We prove that by changing the (hyper)parameters, the provided representation spans all the sharpness measures as a function of the Hessian matrix. In other words, it is provably a \textit{universal representation}. We also provide quantitative theoretical results on the complexity of the sharpness representation as a function of the data dimension.

Moreover, attached to any representation of sharpness, we provide a new loss function, and we prove that the new loss function is \textit{biased} toward minimizing its corresponding sharpness measure. Since the parameterized representation reduces to SAM (i.e., worst-direction) and average-direction sharpness measures \citep{wen2022does} in special cases, it can be considered as a generalized (hyper)parameterized sharpness-aware minimization algorithm. This generalizes the recent study of the explicit bias of a few sharpness-aware minimization algorithms \citep{wen2022does} to a comprehensive class of objectives. Furthermore, this allows us to readily design algorithms with \textit{any} bias of interest, while to the best of our knowledge, only algorithms with biases towards minimizing the trace, operator norm of the Hessian matrix, and a few other sharpness measures are known in the literature.  As instances of our proposed general algorithm, we present \textit{Frob-SAM} and \textit{Det-SAM}, two new sharpness-aware minimization algorithms that are  specifically designed to minimize the Frobenius norm and the determinant of the Hessian of the training loss function, respectively. 

An interesting feature of the given representation is that it provides a systematic way to construct sharpness measures respecting  parameter invariances, e.g., scale-invariance in neural networks. As a specific example, we provide a class of loss functions and algorithms that are invariant under parameter scaling. Note that the classical sharpness measures, such as trace or the operator norm of the Hessian matrix, are not invariant to rescaling or group actions. 

In the experiments, we explore extensively two specific choices of these algorithms: (1) \textit{Frob-SAM}: an algorithm biased toward minimizing the Frobenius norm of the Hessian, a meaningful sharpness notion for non-convex optimization problems and (2) \textit{Det-SAM}: an algorithm  biased toward minimizing the determinant of the Hessian, a scale-invariant sharpness measure. We demonstrate the advantages of these two cases through an extensive series of experiments.



In short, in this paper we make the following contributions:
\begin{itemize}[leftmargin=10mm,label={$\bullet$},itemsep=0.2ex,topsep=0ex]

\item We propose a new class of sharpness measures, as function of the training loss Hessian. We prove that the new representation is \textit{universally expressive}, meaning that it covers all sharpness measures of the Hessian as its (hyper)parameters change.

\item Along with each sharpness measure we provide an optimization objective and prove that the new objective is explicitly \textit{biased} toward minimizing the corresponding sharpness measure. 

\item The structure of the proposed  method allows meaningful applications to models with parameter invariances, as it provides a class of objective functions for any desired type of parameter invariance.

\item We introduce two fundamental illustrative examples of our proposed general representation and the corresponding algorithms: \textit{Frob-SAM} and \textit{Det-SAM}. \textit{Frob-SAM} is geared towards minimizing the Frobenius norm of the Hessian matrix, providing a meaningful and natural solution to the definition problem of sharpness for non-convex optimization problems. Conversely, \textit{Det-SAM} is focused on minimizing the determinant\footnote{To be more precise, the product of non-zero eigenvalues.} of  Hessian, addressing scale-invariant issues related to parameterization.
\end{itemize}

\section{Related Work}

\citet{foret2020sharpness} recently proposed the Sharpness-Aware Minimization (SAM) algorithm to avoid sharp minima.  The SAM objective has connections to a similar robust optimization problem that was suggested for the study of adversarial attacks in deep learning \citep{madry2017towards}. Besides SAM, \citet{nitanda2023parameter} show how parameter averaging for SGD is biased toward flatter minima.  Label noise SGD also prefers flat minima \citep{damian2021label}. 
\citet{woodworth2020kernel} studied the role of sharpness in overparameterization from a kernel perspective.  See \citep{wang2023sharpness} for the applications of flat minima for domain generalization  (see also \citep{cha2021swad}). For applications of SAM in large language models, see \citep{bahri2022sharpness} (also \citep{zhong2022improving}, and \citep{qu2022generalized, shi2023make} for federated learning). Besides those applications, \citet{wen2023sharpness} prove that current sharpness minimization algorithms sometimes fail to generalize for non-generalizing flattest models.

The  (implicit) bias of many optimization algorithms and architectures has been studied, from the Gradient Descent  (GD)   \citep{ji2019implicit, soudry2018implicit} to mirror descent   \citep{gunasekar2018characterizing, azizan2018stochastic}; see also \citep{gunasekar2018implicit} for linear convolutional networks,  and  \citep{lawrence2021implicit} for equivariant networks. \citet{ji2018gradient} observed that linear neural networks are biased toward weight alignment for different layers (see \citep{le2022training} for non-linear networks). \citet{andriushchenko2022towards} study implicit bias of SAM for diagonal linear networks, and \citet{wen2022does} find the explicit bias of the Gaussian averaging method and other SAM variants.  
 
The role of scale-invariance in generalization in deep learning is emphasized in \citep{neyshabur2017exploring}.  \citet{dinh2017sharp}  point out that parameter invariances can lead to the different parameterization of the same function, making the definition of flatness challenging; see also \citep{andriushchenko2023modern} for a recent study. This motivates the study of sharpness measures that are invariant to such reparametrizations.

There have been a few attempts to address reparametrization problems with sharpness measures recently. \citet{kwon2021asam} proposed to adaptively calculate the sharpness in a normalized ball around the loss function to achieve scale invariance. However, their method is limited to scaling problems. \citet{kim2022fisher} took a step further and introduced a new SAM algorithm by capturing the neighborhood of the parameters in an ellipsoid induced by the Fisher information. This way, the neighborhood becomes invariant with respect to the parameter invariances in the network. \citet{jang2022reparametrization} defined an information geometric sharpness measure by investigating the eigenspaces of Fisher Information Matrix (FIM) of distribution parameterized by neural networks. They proved scale-invariance properties for their notion. Even though~\citet{kim2022fisher} and \citet{jang2022reparametrization} enjoy some parameter invariance properties, (1) in practice, their methods are limited to classification tasks because of FIM calculation,
(2) the underlying explicit biasing of their algorithms remains a mystery and is not guaranteed.\footnote{Please refer to~\cref{app:related_work} for a more detailed overview of related work.}

\section{Background}

\subsection{Setting}

Consider a standard learning setup with a labeled dataset $\mathcal{S}$, and a training loss function $L:\mathbb{R}^d \to \mathbb{R}_{\ge0}$, where $L(x)$ denotes the training loss over $\mathcal{S}$ computed for the parameters $x \in \mathbb{R}^d$. The main objective in Empirical Risk Minimization (ERM) is to minimize the training loss $L(x)$ over the feasibility set $\mathcal{X} \subseteq \mathbb{R}^d$.  However, achieving parameters satisfying $L(x) \approx 0$ in overparameterized models is often straightforward. This is 
because in contrast to other models, in overparameterized models, there are \textit{many} global minima, i.e., the set $\Gamma:=\{x \in \mathcal{X}: L(x) = 0\}$ is a manifold -- it is called the \textit{zero-loss manifold} in the literature. 
Moreover, in practical scenarios, it is noteworthy that not all global minima exhibit favorable generalization capabilities~\citep{foret2020sharpness}.   

\subsection{Background on SAM}
It is hypothesized that the avoidance of sharp minima can help generalization performance~\citep{hochreiter1997flat,keskar2016large,izmailov2018averaging}. However, it should be noted that the concept of sharpness encompasses a multitude of distinct definitions in practical contexts. The Sharpness-Aware Minimization (SAM) algorithm \citep{foret2020sharpness} suggests minimizing the training loss function over a small ball around the parameters: 
\begin{align*}
  \min_{x \in \mathcal{X}} \Big \{ L_{\text{SAM}}(x):= \max _{\| v \|_2 \le 1} L(x + \rho v)\Big \},
\end{align*}
where $\rho \in \mathbb{R}_{\ge 0}$ is the \emph{perturbation parameter}. Note that $L_{\text{SAM}}$ can be decomposed into two terms:
\begin{align*}
    L_{\text{SAM}}(x) = \underbrace{L(x)}_{\text{empirical loss}} +  \underbrace{\max _{\| v \|_2 \le 1} \big \{ L(x + \rho v) - L(x)\big\}}_{\text{sharpness}}. 
\end{align*}
\citet{foret2020sharpness} also suggest  alternative average-based sharpness-aware objectives to use PAC bounds on the generalization error of overparameterized models; we follow the definition in \citep{wen2023sharpness}: 
\begin{align*}
    L_{\text{AVG}}(x) &:= \mathbb {E}_{v \sim \mathcal{N}(0,I) }\Big[L(x +  \frac{\rho v}{\| v\|_2})\Big] =  \underbrace{L(x)}_{\text{empirical loss}} + \underbrace{\mathbb {E}_{v \sim \mathcal{N}(0,I) }\Big[L(x +  \frac{\rho v}{\| v\|_2}) - L(x)\Big]}_{\text{sharpness}}. 
\end{align*}

\citet{wen2023sharpness} recently proved that minimizing $L_{\text{SAM}}(x)$ will lead to global minima (i.e., $L(x) \approx 0$) with small  $\lambda_{\max}(\nabla^2L(x))$. In other words, SAM is (explicitly) biased towards minimizing $\lambda_{\max}(\nabla^2L(x))$. Moreover, they show that using $L_{\text{AVG}}(x)$ biases towards minimizing $\frac{1}{d}\tr(\nabla^2L(x))$. This means that SAM measures the sharpness of a global minimum by $\lambda_{\max}(\nabla^2L(x))$, while the average-based objective uses $\frac{1}{d}\tr(\nabla^2L(x))$ to evaluate it.

In the next examples, we argue how both sharpness measures above fail to define a meaningful notion for overparameterized models. In~\Cref{example2}, a special case of problem with parameter invariances, i.e., under parameter rescalings is discussed.

\begin{example} \label{example1} 
The sharpness measures $\lambda_{\max}(\nabla^2L(x)), \lambda_{\min}(\nabla^2L(x))$ and $\tr(\nabla^2L(x))$ are conceptually meaningful when the objective function $L(x)$ is convex, therefore $\lambda_i$s are nonnegative. However, the Loss landscape of neural networks is highly nonconvex, and as a result, $\lambda_i$ can be potentially negative. Consider the toy non-convex example of $L(x_1, x_2) = \frac{1}{2}(x_1^2 - x_2^2$),
\begin{align}
\nabla^2 L = \begin{bmatrix} 1 & 0 \\ 0 & -1 \end{bmatrix}.
\end{align}
For all $x_1, x_2 \in \mathbb{R}$, we know that $\textup{tr}(\nabla^2 L) = 1 +  (-1) = 0$, which in the Trace measure of sharpness it suggests that all the points $x_1, x_2 \in \mathbb{R}$ are equally flat. Are these sharpness notions really capturing the intended concepts? For a better illustration, consider the plot of this function provided in \Cref{fig:FrobSAM_example}. 
This problem extends to other existing notions.

\end{example}

\begin{figure}[h] 
\centering
\includegraphics[width=5cm]{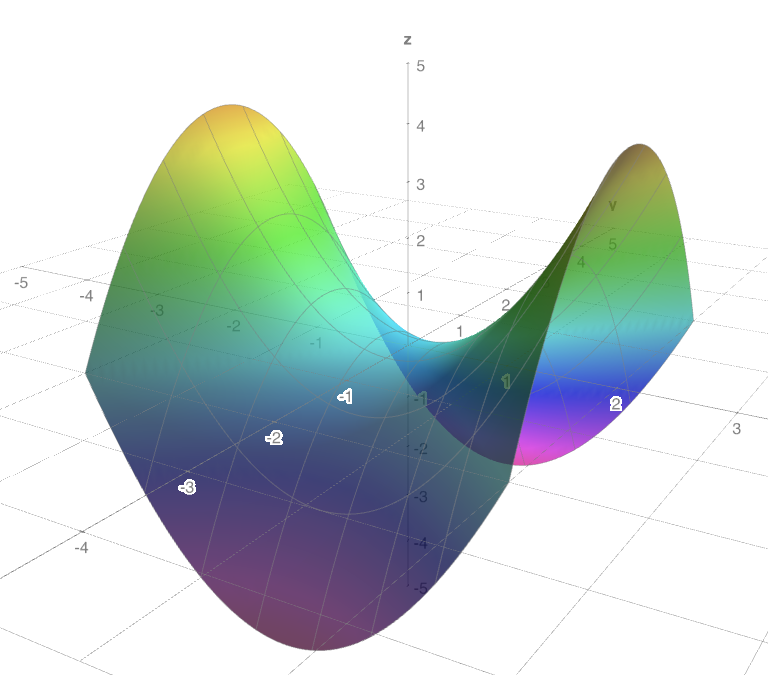}
\caption{The loss landscape of the non-convex objective function $L(x_1, x_2) = \frac{1}{2}(x_1^2 - x_2^2)$. This examples shows how existing sharpness measures fall short of capturing sharpness meaning in non-convex settings. In particular, for all points $(x_1, x_2) \in \mathbb{R}^2$,  $\textup{tr}(\nabla^2 L(x_1, x_2)) = 1 +  (-1) = 0$.}
\label{fig:FrobSAM_example}
\end{figure}

\begin{example} \label{example2}
    Consider the loss function $L(x_1,x_2)= x_1^2 x_2^2 - 2x_1x_2+1$ with two parameters $x_1,x_2\in \mathbb{R}$. It is scale-invariant, i.e., $L(kx_1,\frac{x_2}{k}) = L(x_1,x_2)$ for all $k\neq 0$. Indeed, the zero-loss manifold $\Gamma = \{(x_1,x_2): x_1x_2 = 1\}$ contains infinitely many global minima. Straightforward calculation shows $\nabla^2 L(x_1,x_2) = 
    \big(\begin{smallmatrix}
  2x_2^2 & 4x_1x_2 -2\\
  4x_1x_2-2 & 2x_1^2
\end{smallmatrix}\big)$. Thus, we have $\frac{1}{2}\tr(\nabla^2L(x_1,x_2)) = x_1^2+x_2^2$. After rescaling, we get $\frac{1}{2}\tr(\nabla^2L(x_1,x_2))\Big |_{(kx_1,k^{-1}x_2)} = k^2x_1^2+\frac{x_2^2}{k^2} \neq \frac{1}{2}\tr(\nabla^2L(x_1,x_2))$.  Therefore, as a sharpness measure, $\tr(\nabla^2L(x_1,x_2))$ is not scale-invariant. The problem magnifies in the limit:  $\tr(\nabla^2L(x_1,x_2))\Big |_{(kx_1,k^{-1}x_2)} \to  \infty$ as $k \to \infty$.
Similar problems exist for $\lambda_{\max}(\nabla^2L(x_1,x_2))$. However, $\det(\nabla^2L(x_1,x_2))$ is scale-invariant; we have $\det(\nabla^2L(x_1,x_2))\Big |_{(kx_1,k^{-1}x_2)}=\det(\nabla^2L(x_1,x_2))$ for all $k \neq 0$. 
\end{example}

Note that neural networks are often scale-invariant, e.g., linear networks or ReLU networks after scaling up the parameters of one hidden layer and scaling down the parameters of another hidden layer encode the same functions.  

\section{A New Class of Sharpness Measures}
To define a new class of sharpness measures, we take a closer look at the average-based sharpness-aware objective $L_{\text{AVG}}(x)$; using its Taylor expansion \citep{wen2023sharpness}, we have 
\begin{align*}
    L_{\text{AVG}}(x) &= \mathbb {E}_{v \sim \mathcal{N}(0,I) }\Big[L(x +  \frac{\rho v}{\| v\|_2})\Big] \\&\approx 
    L(x) +  \rho \mathbb{E}_{v \sim \mathcal{N}(0,I) } \Big[\langle \nabla L(x), \frac{v}{\| v\|_2} \rangle\Big]  + \rho^2  \mathbb{E}_{v \sim \mathcal{N}(0,I) }\Big[ \frac{v^t \nabla^2L(x) v}{\|v\|^2_2}\Big]\\& =
    L(x) + \rho^2 \frac{\tr(\nabla^2L(x))}{d}.
\end{align*}
This intuitively tells us that for a small perturbation parameter $\rho$, the leading term in the objective function is the training loss $L(x)$, and after we get close to the zero-loss manifold $\Gamma$, the leading term becomes $\frac{1}{d}\tr(\nabla^2L(x))$, which is exactly the explicit bias of the average-based sharpness-aware minimization objective. This motivates us to define the following parameterized sharpness measure.

\begin{definition}[$(\phi,\psi,\mu)$-sharpness measure] For any continuous functions $\phi,\psi:\mathbb{R}\to \mathbb{R}$ and any (Borel) measure $\mu$ on $\mathbb{R}^d$, the $(\phi,\psi,\mu)$-sharpness measure $S(x;\phi,\psi,\mu)$ is defined as  
\begin{align}
S(x;\phi,\psi,\mu):=\phi \Big( \int  \psi \big( \frac{1}{2}v^t \nabla^2L(x) v \big) d\mu(v)\Big).
\end{align}
Similarly, one can consider continuous functions $\psi:\mathbb{R}\to \mathbb{R}^m$  and $\phi:\mathbb{R}^m\to \mathbb{R}$, for some positive integer $m\ge 1$, and (Borel) measures $\mu_\ell$, $\ell \in [m]$, and define
\begin{align}
S(x;\phi,\psi,\mu):=\phi \Big(& \int  \psi_1 \big( \frac{1}{2}v^t \nabla^2L(x) v \big) d\mu_1(v),\\
& \int  \psi_2 \big( \frac{1}{2}v^t \nabla^2L(x) v \big) d\mu_2(v),\\
&\ldots,\\
&\int  \psi_m \big( \frac{1}{2}v^t \nabla^2L(x) v \big) d\mu_m(v) \Big),
\end{align}
where we use $\mu := \mu_1 \otimes \mu_2 \otimes \ldots \otimes \mu_m$ for the sake of brevity in our notation, and $\psi = (\psi_1,\psi_2,\ldots, \psi_m)^t$. 
\end{definition}
We specify several examples of hyperparameters $(\phi,\psi,\mu)$ in Table \ref{sample-table}, which shows how $(\phi,\psi,\mu)$-sharpness measures can represent various notions of sharpness, as a function of the training loss Hessian matrix.

 \begin{table*}[t]
 \setlength{\tabcolsep}{2pt}
\centering
   \caption{  Various $(\phi,\psi,\mu)$-sharpness measures (Appendix \ref{exmaple_section};   $\lambda_i$, $i \in [d]$, are eigenvalues of Hessian).}   \label{sample-table}
 \begin{tabular}{llllll}
    \toprule
    \multicolumn{4}{c}{Hyperparameters}                   \\
    \cmidrule(r){1-4}
    $\phi(t)$     & $\psi(t)$     &
    $m$   &
    $\mu$ & $S(x;\phi,\psi,\mu)$ (or bias) & Reference \\
    \midrule
    $t$ & $t$  & $1$  & $\text{Uniform}(\mathbb{S}^{d-1})$ &  $\frac{1}{2d}\tr(\nabla^2L(x)) = \frac{1}{2d}\sum_{i=1}^d\lambda_i$ &   \citep{wen2022does}   \\
    $2\max(\{.\})$ & $(t,t,\ldots)$  & $\infty$  & $\otimes_{ \|v\|_2 = 1}\delta_{v}$ &  $ \max_{i\in[d]} \lambda_i $ &   \citep{wen2022does} \\
    $2(t_2 - t_1^2)$ & $(t, t^2)$  & $2$  &  $\calN(0, I_d) \otimes \calN(0, I_d)$ &  $\sum_{i=1}^d\lambda^2_i = \| \nabla^2L(x)\|_{F}$ & This paper   \\
    $(2\pi)^{d}/t^2$     & $\exp(-t)$ & $1$  & Lebesgue measure on $\mathbb{R}^d$  &   $\det(\nabla^2L(x)) = \prod_{i=1}^d \lambda_i$ & This paper  \\
   $t$     & $t^n$        &  $1$  & $\text{Uniform}(\mathbb{S}^{d-1})$ & $q(\lambda_1,\lambda_2,\ldots,\lambda_d)$ & $\text{This paper}^{*}$ \\
   $1/t^2$     & $\exp(\sigma t)$        & $1$  &  $\calN(0,I_d)$ & $\prod_{i=1}^d(1-\sigma \lambda_i)$ & This paper \\
    \bottomrule 
  \end{tabular}
        \\ *$q(\lambda_1,\lambda_2\ldots,\lambda_d)$ is a specific  homogeneous polynomial of degree $n$; see  \cref{eq_polynomial}.
\end{table*}

\section{Expressive Power and Universality}

In this section, we prove that the proposed class of sharpness measures is \textit{universal}. In other words, for any continuous function $S:\R^{d} \to \R$, we specify continuous functions $\phi,\psi$ and a (Borel) probability measure\footnote{We indeed prove that Borel probability measures (as a subset of arbitrary Borel measures) are enough to achieve  universality. } $\mu$ on $\R^d$ such that $S(\lambda_1,\lambda_2\ldots,\lambda_d) = S(x;\phi,\psi,\mu)$, where $\lambda_i$, $i\in [d]$, are the eigenvalues of the Hessian matrix $\nabla^2 L(x)$.

\begin{restatable}[Universality of the $(\phi,\psi,\mu)$-sharpness measures for functions of Hessian eigenvalues]{theorem}{theoremuniv}\label{theoremuniv}  Let $\calA\subseteq \R^{d}$ be a compact set. For any continuous function $S: \calA \to \R$, there exist a product (Borel) probability measure $\mu$, a positive integer $m\le d$,  and continuous functions $\phi:\R^m \to \R$ and $\psi:\R \to \R^m$, such that $S(\lambda_1,\lambda_2\ldots,\lambda_d) = S(x;\phi,\psi,\mu)$ for any $x \in \calA$, where $\lambda_i$, $i\in [d]$, are the eigenvalues of the Hessian matrix $\nabla^2 L(x)$. 
\end{restatable}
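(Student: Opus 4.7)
The plan is to reduce universality to the classical fundamental theorem of symmetric polynomials, combined with a Tietze extension argument. First, I note that whenever each $\mu_k$ is orthogonally invariant, the integral $\int \psi_k(\tfrac12 v^t \nabla^2 L(x) v)\,d\mu_k(v)$ depends on $\nabla^2 L(x)$ only through its eigenvalues $(\lambda_1,\ldots,\lambda_d)$ and symmetrically in them. The construction therefore produces only $S_d$-symmetric functions of the eigenvalues, so the equality is to be read for symmetric $S$ (this reflects the fact that the $\lambda_i$ are intrinsically unordered; otherwise the representation realizes the symmetrization of $S$).

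Next I fix the hyperparameters: take $m=d$, $\mu_k:=\calN(0,I_d)$ for all $k\in[d]$, and $\psi_k(t)=t^k$. Writing $\Phi_k(H):=\int(\tfrac12 v^t H v)^k\,d\mu_k(v)$ with $H=\nabla^2 L(x)$, an Isserlis/Wick computation for Gaussian moments of quadratic forms yields the triangular identity
\begin{align*}
\Phi_k(H)=c_k\, p_k + Q_k(p_1,\ldots,p_{k-1}),
\end{align*}
where $p_j=\tr(H^j)=\sum_i\lambda_i^j$ is the $j$-th power sum, $c_k\neq 0$, and $Q_k$ is a polynomial; e.g.\ $\Phi_1=\tfrac12 p_1$ and $\Phi_2=\tfrac14(2p_2+p_1^2)$. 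Hence the map $\Phi=(\Phi_1,\ldots,\Phi_d)$ is related to the power-sum map $\pi(\lambda)=(p_1,\ldots,p_d)$ by an invertible polynomial change of coordinates.

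By the fundamental theorem of symmetric polynomials (Newton's identities), $\pi$ separates the orbits of the $S_d$-action on $\R^d$ and induces a continuous injection $\bar\pi:\R^d/S_d\to\R^d$; the triangular relation inherits the same property for $\bar\Phi$. A continuous injection from a compact Hausdorff space is a homeomorphism onto its image, so $\bar\Phi$ restricted to $\calA/S_d$ is a homeomorphism onto a compact subset of $\R^d$. Consequently $S$ descends to a continuous $\widetilde S$ on $\bar\Phi(\calA/S_d)$, and by Tietze's extension theorem we may extend $\widetilde S$ to a continuous $\phi:\R^d\to\R$. For this choice of $(\phi,\psi,\mu)$ we then obtain $S(\lambda_1,\ldots,\lambda_d)=\phi(\Phi_1(H),\ldots,\Phi_d(H))=S(x;\phi,\psi,\mu)$ for every $x\in\calA$, as required.

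The main obstacle is the triangular identity $\Phi_k=c_k p_k+Q_k(p_1,\ldots,p_{k-1})$ with $c_k\neq 0$: one must verify, via Wick's formula, that $\tr(H^k)$ appears with nonzero coefficient in $\E[(v^t H v)^k]$ modulo polynomials in lower power sums. The combinatorial content is that the ``single $k$-cycle'' pairing of the $2k$ indices contributes a nonzero multiple of $\tr(H^k)=p_k$, while all other pairings decompose into strictly smaller cycles and hence produce only monomials in $p_1,\ldots,p_{k-1}$. A cleaner alternative that bypasses this bookkeeping is to replace $\psi_k(t)=t^k$ by a polynomial $\widetilde\psi_k$ crafted to cancel the lower-order terms and leave exactly $p_k$; this variant uses only the Chevalley--Shephard--Todd identification of the $S_d$-invariant subalgebra of $\R[\lambda_1,\ldots,\lambda_d]$ with $\R[p_1,\ldots,p_d]$.
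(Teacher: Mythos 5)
Your proof is correct, and it takes a genuinely different route from the paper's. The paper also works with the standard Gaussian and $m=d$, but it uses exponential test functions $\psi_{\sigma}(t)=\exp(\sigma t)$ at $d$ distinct parameters $\sigma_1,\dots,\sigma_d$: the Gaussian integral evaluates in closed form to $\prod_i(1-\sigma\lambda_i)^{-1/2}$, so after squaring and inverting one reads off the values of $p(\sigma)=\prod_i(1-\lambda_i\sigma)$ at $d$ points, recovers the coefficients of $p$ by inverting a Vandermonde matrix, and recovers the eigenvalues as the (suitably ordered) roots. Your route replaces this with monomial test functions $\psi_k(t)=t^k$, the Wick/cumulant identity $\E[(v^tHv)^k]=2^{k-1}(k-1)!\,\tr(H^k)+Q_k(p_1,\dots,p_{k-1})$, and the fact that power sums separate $S_d$-orbits; you then get continuity of $\phi$ for free from compactness plus Tietze rather than from continuity of roots in coefficients. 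The trade-offs: the paper's computation is fully explicit but needs the convergence constraint $\sigma\in(-\epsilon,\epsilon)$ tied to the spectral radius on $\calA$, and its step ``having the polynomial is equivalent to having its roots, so one can find a continuous $\phi_1$'' silently invokes exactly the unordered-multiset/symmetrization issue that you make explicit; your version avoids the convergence constraint entirely (all Gaussian moments of a quadratic form are finite) and handles the $S_d$-quotient cleanly, at the cost of the one nontrivial combinatorial fact $c_k\neq 0$, which you correctly isolate and justify via the single-$k$-cycle pairing (equivalently, the cumulants of $v^tHv$ are nonzero multiples of $\tr(H^k)$). Both proofs establish the same statement, read, as you note and as the paper implicitly assumes, for $S$ a function of the unordered spectrum.
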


We present the proof of Theorem \ref{theoremuniv} in Appendix \ref{proof_theoremuniv}.

Note that to achieve universality, we need the functions $\phi,\psi$ to be of dimension $m=d$. However, as one can see in Table \ref{sample-table}, many celebrated sharpness measures can indeed be represented using only small $m$. We believe that practically small hyperparameter $m$ is enough, as it is motivated from the measures in Table \ref{sample-table}.

While we proved the universality of the proposed class of sharpness measures for continuous functions of the Hessian eigenvalues, one may be interested in measuring sharpness with more information about the loss Hessian (e.g., the eigenvectors of the loss Hessian). The following theorem proves the universality for this class of arbitrary functions. 

\begin{restatable}[Universality of the $(\phi,\psi,\mu)$-sharpness measures for arbitrary  functions of Hessian]{theorem}{theoremuniveigenfunction}\label{theoremuniveigenfunction}  For any continuous function $S: \R^{d\times d} \to \R$, there exist  a positive integer $m\le d(d+1)/2$,  (Borel) probability measures $\mu_\ell$, $\ell \in [m]$,  and continuous functions $\phi:\R^m \to \R$ and $\psi:\R \to \R^m$, such that $S(\nabla^2 L(x)) = S(x;\phi,\psi,\mu)$ for any $x \in \R^d$, where $\mu := \mu_1\otimes \mu_2 \otimes \ldots \otimes \mu_m$ is a product probability measure. 
\end{restatable}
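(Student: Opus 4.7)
The plan is to exploit the fact that, for a symmetric matrix $H = \nabla^2 L(x) \in \R^{d\times d}$, the map $H \mapsto v^t H v$ is a linear functional on the $d(d+1)/2$-dimensional space of symmetric matrices, and that $d(d+1)/2$ carefully chosen such functionals form a dual basis. Unlike the eigenvalue-only setting of \cref{theoremuniv}, here we have direct linear access to the entries of $H$ via $v^t H v$, so Dirac probability measures together with a linear-algebraic reconstruction suffice.

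Concretely, I would choose the following $m = d(d+1)/2$ vectors in $\R^d$: the standard basis vectors $v_i = e_i$ for $i \in [d]$, and $v_{ij} = e_i + e_j$ for $1 \le i < j \le d$. Setting $\mu_\ell := \delta_{v_\ell}$ (a Borel probability measure) and $\psi_\ell(t) := t$ for each $\ell \in [m]$, the $\ell$-th integral collapses to a single evaluation:
\[
\int \psi_\ell\!\left(\tfrac{1}{2} v^t H v\right) d\mu_\ell(v) = \tfrac{1}{2} v_\ell^t H v_\ell,
\]
which equals $\tfrac{1}{2} H_{\ell\ell}$ for the diagonal choices and $\tfrac{1}{2}(H_{ii}+H_{jj}) + H_{ij}$ for the pair choices. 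Since $H$ is symmetric, these $m$ linear functionals jointly determine $H$: the first $d$ give the diagonal, and subtracting the diagonal contributions from the pair values recovers each off-diagonal entry. Hence the linear map $\Phi : \R^{d\times d}_{\mathrm{sym}} \to \R^m$ defined by $\Phi(H)_\ell := \tfrac{1}{2} v_\ell^t H v_\ell$ is a bijection, and admits a continuous linear inverse $T : \R^m \to \R^{d\times d}$ (extended linearly to all of $\R^m$).

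I would then define $\phi := S \circ T : \R^m \to \R$, which is continuous because $S$ is continuous and $T$ is linear. For every $x \in \R^d$,
\[
S(x;\phi,\psi,\mu) = \phi\big(\tfrac{1}{2} v_1^t H v_1, \ldots, \tfrac{1}{2} v_m^t H v_m\big) = S\big(T \circ \Phi(H)\big) = S(H) = S(\nabla^2 L(x)),
\]
completing the proof. Observe that the bound $m \le d(d+1)/2$ is intrinsic: it matches the dimension of the space of symmetric $d \times d$ matrices, which is the minimum number of real observations needed to identify $H$.

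The only conceptual obstacle I anticipate is recognizing that Dirac measures are permitted (they are Borel probability measures, and the integral then simply evaluates the integrand at the point mass) and that the sharpness framework allows $\phi$ to be any continuous map on $\R^m$, including the composition with a non-trivial linear reconstruction map. Once these are noted, the argument reduces to the elementary observation that quadratic forms evaluated at $\{e_i\} \cup \{e_i + e_j\}_{i<j}$ span the dual of the symmetric matrices, bypassing the more delicate Stone--Weierstrass-style arguments that are needed in \cref{theoremuniv} where only symmetric eigenvalue data is accessible.
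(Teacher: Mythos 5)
Your proposal is correct and follows essentially the same route as the paper's own proof: the same Dirac measures $\delta_{e_i}$ and $\delta_{e_i+e_j}$, the identity $\psi_\ell(t)=t$, and the linear reconstruction of the symmetric matrix $H$ from the $d(d+1)/2$ quadratic-form evaluations, followed by composing $S$ with the inverse linear map to define $\phi$. Your write-up is in fact slightly more explicit than the paper's about defining $\phi = S \circ T$ and verifying its continuity.
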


We present the proof of Theorem \ref{theoremuniveigenfunction} in Appendix \ref{proof_theoremuniveigenfunction}.

Note that arbitrary functions of the Hessian matrix can be quite hard to compute, e.g., consider the permanent of the Hessian matrix. Moreover, the dimension $m$ must be quite  large to allow us to prove the universality in overparameterized models (for $d$ of considerable size), since the generality bound scales as $\mathcal{O}(d^2)$. Nevertheless, in practice, only small $m$ allows to cover many interesting cases.

\section{Explicit Bias}

Now that we defined a flexible set of sharpness measures and we proved that it is  universally expressive, the following question arises: how can one achieve $S(x;\phi,\psi,\mu)$ as the explicit bias of an objective function that only relies on the zeroth-order information about the training loss, similar to $L_{\text{SAM}}(x)$ and $L_{\text{AVG}}(x)$? To answer this question, we introduce  the $(\phi,\psi,\mu)$-sharpness-aware loss function as follows.

\begin{definition}\label{def:sharp} The $(\phi,\psi,\mu)$-sharpness-aware loss function
\begin{align*}
    L_{(\phi,\psi,\mu)}&(x):= \underbrace{L(x)}_{\text{empirical loss}}  +   \rho^2 \underbrace{\phi \Big( \int  \psi \Big( 
    \frac{1}{\rho^2} \big( L(x+\rho v)- L(x) \big) 
    \Big) d\mu(v)\Big)}_{:=R_{\rho}(x)~\text{sharpness}} = L(x) + \rho^2 R_{\rho}(x),
\end{align*}
where $\rho$ is the perturbation parameter and $R_{\rho}(x)$ denotes the sharpness regularizer.
\end{definition}
  Extending this definition  to the cases with $m>1$ is straightforward.

In the above definition, the new regularizer $R_{\rho}(x)$ is an approximation of the sharpness measure $S(x;\phi,\psi,\mu)$ as $\rho \to 0^{+}$. As a result, it is expected that minimizing $L_{(\phi,\psi,\mu)}(x)$  lead to minimizing the training loss as well as the sharpness measure $S(x;\phi,\psi,\mu)$. The next theorem formalizes this intuitive observation via characterizing the explicit bias of minimizing the sharpness-aware loss function $L_{(\phi,\psi,\mu)}(x)$.

\begin{restatable}[Explicit bias of the $(\phi,\psi,\mu)$-sharpness-aware loss function]{theorem}{theorembias}\label{theorembias} Given a triplet $(\phi,\psi,\mu)$, $m\ge 1$,  and a training loss function $L:\R^d \to \R_{\ge0}$, assume that:
\begin{itemize}
    \item $L(x)$ is third-order continuously  differentiable and satisfies the following upper bound
    \begin{align}
    \max_{i,j,k \in \{1,2,3\}} |\partial_i \partial_j \partial_k L(v)|  = O(\| v\|^{-1}),
    \end{align}
    for $v \in \R^d$ as $\| v\|_2 \to \infty$. 

    \item The two functions $\phi, \psi$ are continuously differentiable. 

    \item For some $C> \max_{x \in \mathcal{X}}\max_{i \in [d]} |\lambda_{i}(\nabla^2 L(x))|$, we have 
$\int   \|v\|_2^2\tilde{\psi}_i(v)
    d\mu(v)<\infty$, $i \in [m]$, where
\begin{align}
    \tilde{\psi}_i(v) := \max_{|t|\le \|v\|_2} |\psi_i'(Ct^2)|.
\end{align}
\end{itemize}
Then, there  exists an open neighborhood $U \supsetneq \Gamma$, where $\Gamma$ is the zero-loss manifold, for connected  $U$ and $\Gamma$, such that if for some $u \in U$, one has
\begin{align}
    L(u) + \rho^2 R_{\rho}(u) - \inf_{x \in U}\Big (
    L(x) + \rho^2 R_{\rho}(x)
    \Big) \le \Delta \rho^2, 
\end{align}
with some  optimally gap $\Delta>0$, then 
\begin{align}
    L(u) \le \underbrace{
    \inf_{x \in U} L(x)
    }_{ = 0}+ (\Delta + o_{\rho}(1)) \rho^2, 
\end{align}
and also
\begin{align}
         S(u;\phi,\psi,\mu) \le  \inf_{x \in \Gamma}S(x;\phi,\psi,\mu)  + \Delta + o_{\rho}(1). 
\end{align}

\end{restatable}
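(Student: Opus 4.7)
The plan is to (i) show $R_\rho(x^*) \to S(x^*;\phi,\psi,\mu)$ as $\rho \to 0^+$ for every $x^* \in \Gamma$ via Taylor's theorem and dominated convergence, and then (ii) combine this with the near-optimality hypothesis through a compactness/subsequence argument to extract both claimed inequalities.

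For (i), since $L\ge 0$ attains zero on $\Gamma$, we have $\nabla L(x^*)=0$, so a third-order Taylor expansion gives
\[
\tfrac{1}{\rho^2}\bigl(L(x^*+\rho v)-L(x^*)\bigr) = \tfrac{1}{2}\, v^t \nabla^2 L(x^*)\, v + E_\rho(v,x^*),
\]
with $|E_\rho(v,x^*)| \le \tfrac{\rho}{6}\max_{i,j,k}|\partial_i\partial_j\partial_k L(\xi)|\,\|v\|_2^3$ for some intermediate $\xi$. The decay assumption on the third derivatives (together with continuity on compacta) yields $E_\rho(v,x^*)\to 0$ pointwise in $v$ and $|E_\rho|\le C\|v\|_2^2$ uniformly in $\rho\in(0,\rho_0]$. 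A mean-value bound on each $\psi_i$ with the envelope $\tilde\psi_i$ produces the integrable dominating function $C\|v\|_2^2\,\tilde\psi_i(v)$; dominated convergence followed by continuity of $\phi$ gives the claim.

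For (ii), first pick $x^*_\varepsilon\in\Gamma$ with $S(x^*_\varepsilon;\phi,\psi,\mu)\le \inf_{x\in\Gamma} S(x;\phi,\psi,\mu)+\varepsilon$ and use it as a test point to deduce $\inf_{x\in U}(L+\rho^2 R_\rho)\le \rho^2(\inf_\Gamma S+o_\rho(1))$. Combined with the hypothesis, this yields
\[
L(u)+\rho^2 R_\rho(u) \le \rho^2\bigl(\inf_{x\in\Gamma}S(x;\phi,\psi,\mu)+\Delta+o_\rho(1)\bigr).
\]
Choose $U$ as a bounded tubular neighborhood of $\Gamma$ inside $\mathcal{X}$, so that $\bar U$ is compact and $L$ has a Morse--Bott-type quadratic structure transverse to $\Gamma$. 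For any sequence $\rho_n\to 0^+$ and near-optimizers $u_n\in U$ satisfying the displayed bound, a uniform lower bound on $R_{\rho_n}(u_n)$ (from the integrability envelope together with the Hessian bound $C$ on $\bar U$) forces $L(u_n)\to 0$, so $u_n\to u^*\in\Gamma$ along a subsequence. Writing $u_n=\pi_\Gamma(u_n)+\rho_n w_n$ with $w_n$ bounded, a refined Taylor analysis at $\pi_\Gamma(u_n)$ yields the limits of $L(u_n)/\rho_n^2$ and $R_{\rho_n}(u_n)$; inserting these into the inequality and using $S(u^*)\ge \inf_\Gamma S$ (since $u^*\in\Gamma$) delivers both $L(u_n)\le (\Delta+o(1))\rho_n^2$ and $S(u^*)\le \inf_\Gamma S+\Delta+o(1)$, which transfers to $S(u_n)$ by continuity of $S$ in $x$.

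The main obstacle is the off-manifold analysis of $R_{\rho_n}(u_n)$, where the naive Taylor expansion at $u_n$ produces a term $\rho_n^{-1}\nabla L(u_n)^t v$ that is \emph{a priori} unbounded. Controlling it requires showing $\|\nabla L(u_n)\| = O(\rho_n)$, which follows from the preliminary bound $L(u_n)=O(\rho_n^2)$ (obtained from the lower bound on $R_\rho$) combined with the local quadratic structure $L(x)\asymp \operatorname{dist}(x,\Gamma)^2$ near the smooth zero-loss manifold. With $\|\nabla L(u_n)\|/\rho_n$ bounded, the linear term becomes $\mu$-integrable via the same envelope as in (i), and the dominated-convergence argument carries through, closing the chain of estimates.
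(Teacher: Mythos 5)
Your part (i) --- the Taylor expansion at points of $\Gamma$ (where $\nabla L=0$) plus the $\tilde\psi_i$ envelope and dominated convergence --- is sound and matches the paper's estimate in spirit. The problem is in part (ii), and you have in fact put your finger on it yourself: the term $\rho^{-1}\langle\nabla L(u),v\rangle$ inside the argument of $\psi$. Your proposed resolution is circular. You want to deduce $L(u_n)=O(\rho_n^2)$ from ``a uniform lower bound on $R_{\rho_n}(u_n)$ from the integrability envelope,'' and only then conclude $\|\nabla L(u_n)\|=O(\rho_n)$. But on a \emph{fixed} tubular neighborhood $U$ no such uniform lower bound exists: the envelope $\tilde\psi_i(v)=\max_{|t|\le\|v\|_2}|\psi_i'(Ct^2)|$ only controls $\psi_i'$ at arguments of size $O(\|v\|_2^2)$, while at a point $u$ with $\dist(u,\Gamma)\asymp\delta$ fixed, the argument of $\psi_i$ is $\asymp\rho^{-1}\delta\langle n,v\rangle$ and blows up as $\rho\to0^+$. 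Since the hypotheses permit $\psi$ unbounded below (e.g.\ $\psi(t)=-t^2$ with Gaussian $\mu$ satisfies the third bullet), $\inf_{x\in U}(L+\rho^2R_\rho)$ over a fixed $U$ can be driven to $-\infty$ by off-manifold points, so the test-point bound and the subsequent comparison break down. The preliminary bound you need to tame the linear term is thus exactly the bound you were trying to prove, and the chain does not close. The Morse--Bott structure $L\asymp\dist(\cdot,\Gamma)^2$ that you invoke is also not among the stated hypotheses.

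The paper avoids this entirely by making the neighborhood $\rho$-dependent: it sets $U=U_\rho:=\{x\in\R^d:\|\nabla L(x)\|_2<\rho^2\}$. On $U_\rho$ one has $|\rho^{-1}\langle\nabla L(x),v\rangle|<\rho\|v\|_2$ for every $x$, so the linear term is absorbed directly into the $O(\rho(\|v\|_2^2+\|v\|_2))$ Taylor error, the envelope applies, and $R_\rho(x)=S(x;\phi,\psi,\mu)+O(\rho)$ holds \emph{uniformly on $U_\rho$} with no bootstrapping. The two conclusions then follow by substituting this uniform approximation into the near-optimality hypothesis: the sharpness bound is a one-line comparison with $\inf_{x\in\Gamma}S$, and the loss bound is a short contradiction argument using $U_\rho\to\Gamma$ as $\rho\to0^+$. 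If you want to keep a fixed neighborhood, you would need to first prove (not assume) a uniform lower bound on $\rho^2R_\rho$ there, which the stated hypotheses do not supply; adopting the shrinking $U_\rho$ is the essential missing idea.
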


We present the proof of Theorem \ref{theorembias} in Appendix \ref{proof_theorembias}.

The above theorem shows how using the new objective function $L_{(\phi,\psi,\mu)}(x)$ leads to explicitly biased optimization algorithms towards minimizing the sharpness measure $S(x;\phi,\psi,\mu)$ over the zero-loss manifold $\Gamma$. Indeed, it proves that if we are close to the zero-loss manifold (i.e., $u \in U$ for some open neighborhood $U \supsetneq \Gamma$), and also $L_{(\phi,\psi,\mu)}(u)$ is close to its global minimum over $U$, then (1) the training loss function $L(u)$ is close to zero, and (2) the corresponding sharpness measure $S(u;\phi,\psi,\mu)$ is close to its global minimum over the zero-loss manifold, with respect to an optimality gap $\Delta$.

\begin{algorithm*}[t]
   \caption{  $(\phi, \psi, \mu)$-Sharpness-Aware Minimization Algorithm (with $m = 1$)}
\begin{algorithmic}\label{algo}
   \STATE {\bfseries Input:} The triplet $(\phi, \psi, \mu)$, training loss $L(x)$, step size $\eta$, perturbation parameter $\rho$, number of samples $n$, 
   \STATE {\bfseries Output:} Model parameters $x_t$ trained with $(\phi, \psi, \mu)$-sharpness-aware minimization algorithm
   \STATE Initialization: $x \gets x_0$ and  $t \gets 0$
   \WHILE{1}
   \STATE Sample $v_1,v_2,\ldots, v_n \overset{\text{i.i.d.}}{\sim} \mu$
    \STATE Compute the following: \vspace{-0.5cm}\begin{align*}
       g_t &= \nabla L(x_t) +  \phi'\Big(\sum_{i=1}^n \frac{1}{n}\psi \Big(\frac{1}{\rho^2} \big(L(x_t+\rho v_i) - L(x_t)\big)\Big) \Big) \\
      &\hspace{2cm}\times  \sum_{i=1}^n \frac{1}{n} \Big \{
    \psi'\Big(\frac{1}{\rho^2} \big(L(x_t + \rho v_i) - L(x_t)\big) \Big)
    \times \Big(\nabla L(x_t + \rho v_i) - \nabla L(x_t)\Big) \Big \}.
    \end{align*} \vspace{-0.5cm}
    \STATE Update the parameters: $x_{t+1} = x_t - \eta g_t$
    \STATE $t \gets t+1$
   \ENDWHILE
   \end{algorithmic}
\end{algorithm*}

\section{Invariant Sharpness-Aware Minimization}

For which hyperparameters $(\phi,\psi,\mu)$ is the corresponding sharpness measure scale-invariant? The following theorem answers this question. 

\begin{restatable}[Scale-invariant $(\phi,\psi,\mu)$-sharpness measures]{theorem}{theoremscale}\label{theoremscale}  Consider a scale-invariant loss function $L(x)$ and let $\mu$ be a Borel measure  of the form
\begin{align} \label{eq:inv_measure}
    d\mu(x) = f\Big(\prod_{i=1}^d x_i\Big)\prod_{i=1}^d dx_i, 
\end{align}
where $f:\mathbb{R} \to \mathbb{R}$ is a measurable function\footnote{In \Cref{lemma:scale_invatiant_measures}, we show that any scale-invariant measure is of this form.}. Then, for any continuous functions $\phi,\psi$, the corresponding sharpness measure $S(x;\phi,\psi,\mu)$ is scale-invariant; this means that $S(x;\phi,\psi,\mu)=S(Dx;\phi,\psi,\mu)$ for any diagonal matrix $D\in \mathbb{R}^{d\times d}$ with $\det(D)=1$.
\end{restatable}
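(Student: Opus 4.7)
The plan is to reduce the invariance of $S(x;\phi,\psi,\mu)$ to a linear change-of-variables argument in the defining integral. The first step will be to extract a transformation rule for the Hessian from the scale-invariance assumption $L(Dx) = L(x)$. Differentiating twice in $x$ and using that $D$ is diagonal yields $D\,\nabla^2 L(Dx)\,D = \nabla^2 L(x)$, hence
\[
\nabla^2 L(Dx) \;=\; D^{-1}\,\nabla^2 L(x)\,D^{-1}.
\]
Plugging this into the definition of $S(Dx;\phi,\psi,\mu)$ and using that $D^{-1}$ is symmetric, so $v^t D^{-1} A D^{-1} v = (D^{-1}v)^t A (D^{-1}v)$, we get
\[
S(Dx;\phi,\psi,\mu) \;=\; \phi\!\left( \int \psi\!\left(\tfrac{1}{2}(D^{-1}v)^t\,\nabla^2 L(x)\,(D^{-1}v)\right) d\mu(v)\right).
\]

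The second step is to change variables $w = D^{-1}v$ in the integral; this is where the specific form of $\mu$ in \eqref{eq:inv_measure} enters. I would verify that the pushforward of $\mu$ under $v\mapsto D^{-1}v$ equals $\mu$ via two elementary observations: first, $\prod_i v_i = (\det D)\prod_i w_i = \prod_i w_i$ since $\det D = 1$, so the factor $f(\prod_i v_i)$ is unchanged; second, $\prod_i dv_i = |\det D|\prod_i dw_i = \prod_i dw_i$, so the Lebesgue factor is also unchanged. Therefore $d\mu(v) = d\mu(w)$ under this substitution, and the integral collapses to $\int \psi\big(\tfrac{1}{2}w^t \nabla^2 L(x)\,w\big)\,d\mu(w)$, which gives $S(Dx;\phi,\psi,\mu) = S(x;\phi,\psi,\mu)$. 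The extension to the vector-valued case $m>1$ requires no new idea: the same computation applies coordinatewise, since the product measure $\mu_1\otimes\cdots\otimes\mu_m$ is invariant component by component under $v\mapsto D^{-1}v$.

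I do not anticipate a substantive obstacle here, as the argument is essentially a one-line change of variables once the Hessian transformation law is written down. The one point that merits careful bookkeeping, and which is also the conceptual content of the hypothesis, is verifying that the two distinguishing features of the measure \eqref{eq:inv_measure}, namely the dependence of $f$ on the product $\prod_i x_i$ and the pure Lebesgue weight $\prod_i dx_i$, are precisely what is needed so that the Jacobian factor and the rescaling of $f$'s argument simultaneously cancel when $\det D=1$. This is exactly why that family of measures is singled out in the statement, and in the companion lemma referenced in the footnote one expects the converse: any scale-invariant Borel measure must take this form.
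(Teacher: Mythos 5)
Your proposal is correct and follows essentially the same route as the paper's proof: derive the Hessian transformation law $\nabla^2 L(Dx)=D^{-1}\nabla^2L(x)D^{-1}$ from $L(Dx)=L(x)$, then perform the change of variables $u=D^{-1}v$ and use the specific form of $\mu$ together with $\det(D)=1$ to check that both the argument of $f$ and the Lebesgue factor are preserved. No substantive differences.
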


We present the proof of Theorem \ref{theoremscale} in Appendix \ref{proof_theoremscale}. 

\begin{example}
    Note that $\det(\nabla^2L(x))$ is a scale-invariant sharpness measure;  for any diagonal matrix $D\in \mathbb{R}^{d\times d}$ with $\det(D)=1$, 
    \begin{align*}
        \det(\nabla^2L(x))&\Big|_{Dx} = \det(D^{-1}\nabla^2L(x)D^{-1})  = \det(D^{-1})^2\det(\nabla^2L(x)) = \det(\nabla^2L(x)).
    \end{align*}
    Note that Theorem \ref{theoremscale} also supports the scale-invariance of the determinant; the Lebesgue measure satisfies the condition in Theorem \ref{theoremscale} with $f \equiv 1$, and we have the representation of the determinant in Table \ref{sample-table}.
\end{example}

While in \cref{theoremscale} we only considered scale-invariances, one can generalize it to a general class of parameter invariances in the following theorem.

\begin{restatable}[General parameter-invariant $(\phi,\psi,\mu)$-sharpness measures]{theorem}{theoremgroup}\label{theoremgroup}
    Let $G$ be a group acting by matrices on $\R^d$, and assume that $L(x)$ is invariant with respect to the action of $G$. Then, for any $G$-invariant (Borel) measure $\mu$, and  any continuous functions $\phi,\psi$, the corresponding sharpness measure $S(x;\phi,\psi,\mu)$ is $G$-invariant; this means that $S(x;\phi,\psi,\mu)=S(A_gx;\phi,\psi,\mu)$ for any  matrix $A_g\in \mathbb{R}^{d\times d}$ corresponding to the action of an element $g \in G$.  
\end{restatable}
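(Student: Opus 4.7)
The plan is to exploit the two ingredients separately: the $G$-invariance of $L$ forces a transformation law on the Hessian, while the $G$-invariance of $\mu$ kills off the extra matrix factors via a change of variables. Since each $A_g$ is invertible (because $A_g A_{g^{-1}} = A_e = I$), the argument is structurally the same as in Theorem~\ref{theoremscale}, just stated in the language of a general matrix group.

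First, I would differentiate the identity $L(A_g x) = L(x)$ twice in $x$. The chain rule applied once gives $A_g^t \nabla L(A_g x) = \nabla L(x)$, and differentiating again yields
\begin{align*}
A_g^t \, \nabla^2 L(A_g x)\, A_g = \nabla^2 L(x),
\quad\text{i.e.,}\quad
\nabla^2 L(A_g x) = (A_g^{-1})^t\, \nabla^2 L(x)\, A_g^{-1}.
\end{align*}
Plugging into the definition of the sharpness measure and using the bilinear-form identity $v^t (A_g^{-1})^t M A_g^{-1} v = (A_g^{-1}v)^t M (A_g^{-1}v)$, I obtain
\begin{align*}
S(A_g x;\phi,\psi,\mu)
= \phi\!\left(\int \psi\!\left(\tfrac{1}{2}(A_g^{-1}v)^t \nabla^2 L(x)\, (A_g^{-1}v)\right) d\mu(v)\right).
\end{align*}
Now I would invoke the $G$-invariance of $\mu$: since $g^{-1} \in G$ and $\mu$ is invariant under the pushforward by $A_h$ for every $h \in G$, the change of variables $w = A_g^{-1} v$ leaves the measure unchanged, so the integral equals $\int \psi(\tfrac{1}{2} w^t \nabla^2 L(x)\, w)\, d\mu(w)$. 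This identifies $S(A_g x;\phi,\psi,\mu)$ with $S(x;\phi,\psi,\mu)$, which is the claim.

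For the vector-valued version with $m>1$, the same reasoning applies coordinate-wise, provided each $\mu_\ell$ is individually $G$-invariant (which is implied by $G$-invariance of the product measure $\mu = \mu_1 \otimes \cdots \otimes \mu_m$ when $G$ acts diagonally, or can be assumed component-wise). Finally, to recover Theorem~\ref{theoremscale} as a corollary, I would verify that measures of the form \eqref{eq:inv_measure} are $G$-invariant for $G = \{D : D \text{ diagonal}, \det D = 1\}$; this is essentially the statement of \Cref{lemma:scale_invatiant_measures} referenced in the paper, combined with the fact that $\prod_i x_i$ is a diagonal-scaling invariant.

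I do not anticipate a serious obstacle: the only points that need care are (i) the invertibility of $A_g$, handled above, and (ii) the direction of the invariance ($\mu$ invariant under $A_g$ versus under $A_g^{-1}$), which are equivalent because both $g$ and $g^{-1}$ lie in $G$. The mild regularity hypothesis needed is that $L$ be $C^2$ so that the chain rule computation of $\nabla^2 L(A_g x)$ is valid; this is already implicit in the framework where the Hessian-based sharpness measure is defined.
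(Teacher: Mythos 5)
Your proposal is correct and follows essentially the same route as the paper: derive the transformation law $\nabla^2 L(A_g x) = (A_g^{-1})^t\,\nabla^2 L(x)\,A_g^{-1}$ from the invariance of $L$, then absorb the $A_g^{-1}$ factors by the change of variables $u = A_g^{-1}v$ using the $G$-invariance of $\mu$. Your write-up is in fact slightly more careful than the paper's (you keep the transpose on $(A_g^{-1})^t$ explicit and note the $A_g$-versus-$A_g^{-1}$ invariance point), and the extra remarks on the $m>1$ case and on recovering Theorem~\ref{theoremscale} are consistent with the paper's treatment.
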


The proof of this theorem is analogous to \cref{theoremscale} and is deferred to \cref{proof_theoremgroup}. Thus, the strategy to create sharpness measures invariant to any group action $G$ is simply to choose a group action invariant measure $\mu$. Now, for any family of choices of functions $\phi$ and $\psi$, we obtain a family of $G$-invariant sharpness measures. Consequently, there is a family of $G$-invariant Sharpness-Aware Minimization algorithms,  as explained in \cref{sec:algorithm}.

\section{$(\phi, \psi, \mu)$-Sharpness-Aware Minimization Algorithm} \label{sec:algorithm}

In this section, we present the pseudocode for the $(\phi, \psi, \mu)$-Sharpness-Aware Minimization Algorithm (see \cref{algo}). For simplicity, we present the algorithm for the full-batch gradient descent, and assume that $m=1$. Extending it to the mini-batch case with $m>1$ is straightforward (see \cref{algo-gen}).  The idea is to apply (stochastic) gradient decent or other optimization algorithms on the $(\phi,\psi,\mu)$-sharpness-aware loss function defined in~\cref{def:sharp},
\begin{align*}
    L_{(\phi,\psi,\mu)} = L(x) + \rho^2 R_{\rho}(x).
\end{align*}
However, calculating the sharpness term $R_{\rho}(x)$ directly is analytically hard to do because of the integration with respect to the probability measure $\mu$. Hence, we propose to estimate the inner integration at each iteration with i.i.d. random variables $\nu_1, \nu_2, \dots, \nu_n \sim \mu$ as perturbations, i.e., 
\begin{align*}
    \tilde{R}_{\rho}(x) \coloneqq \phi \Big( \frac{1}{n} \sum_{i = 1}^{n}  \psi \Big( 
    \frac{1}{\rho^2} \big( L(x+\rho v_i)- L(x) \big) 
    \Big)\Big).
\end{align*}
When $\phi$ satisfies continuity conditions, for large enough $n$, the estimator $\tilde{R}_{\rho}(x)$ will converge to $R_{\rho}(x)$. Now, we calculate the gradients of $L(x) + \rho^2 \tilde{R}_{\rho}(x)$. By chain rule,
\begin{align*}
    \rho^2 \nabla \tilde{R}_{\rho}(x)  = \phi'\Big(\sum_{i=1}^n \frac{1}{n}\psi \Big(\frac{1}{\rho^2} \big(L(x_t+\rho v_i) - L(x_t)\big)\Big) \Big)  & \times  \sum_{i=1}^n \frac{1}{n} \Big \{
    \psi'\Big(\frac{1}{\rho^2} \big(L(x_t + \rho v_i) - L(x_t)\big) \Big) \\
    & \times \Big(\nabla L(x_t + \rho v_i) - \nabla L(x_t)\Big) \Big \},
\end{align*}
which leads to \cref{algo}.

Our algorithm needs $n + 1$ gradient evaluations per iteration, which for $n=1$ matches the SAM algorithm~\citep{foret2020sharpness}. In practice, small values for $n$ demonstrate the expected results, therefore, the computational overhead of our algorithm is not a barrier.


Note that to recover the original SAM algorithm, one can set the function $\phi, \psi$ to identity, $m=1$, and choose $\mu$ to be the single-point measure on $\nabla L(x_t)/\| \nabla L(x_t)\|_2$ with $n=1$ sample  for each $t$.

Moreover, even though to prove  universality, we only used probability measures,  we proposed a compact representation of determinant with Lebesgue measure  with $m=1$ in \cref{sample-table} and \cref{theoremscale}. However,  integrals with respect to Lebesgue measure cannot be estimated via sampling and we need to truncate the integral to integration over a large hypercube; this allows us to use \cref{algo} for the scale-invariant sharpness measures. Also, this approximation achieves non-zero sharpness  in cases that the Hessian matrix is not full-rank (which happens in overparametrized models), as it gets the product of non-zero eigenvalues. We use this approximation in the next section  to implement the method. 

\section{Frobenius SAM and Determinant SAM}

To be more concrete, we specify \cref{algo-gen} (for arbitrary $m$) to the case with the Frobenius norm regularizes (with $m=2$), 
which we call the \textit{Frob-SAM} algorithm. Note that to achieve this, one needs to specify $\phi(t_1,t_2) = 2(t_2-t_1^2)$ and $\psi(t) = (t, t^2).$ Furthermore, since we only need to collect samples from the Gaussian distribution to get the Frobenius norm bias (see \cref{sample-table}), we can use the same samples to estimate both integrals for the functions $\psi_1(t) = t$ and $\psi_2(t) = t^2$. Replacing these assumptions into the formula given in \cref{algo-gen}, we get the following update rule:
\begin{align*}
    g_t = \nabla L(x_t)  +&4 \sum_{i=1}^n  \frac{1}{n\rho^2} \Big \{\big(L(x_t+\rho v_{i}) - L(x_t)\big)  \times  \Big(\nabla L(x_t + \rho v_{i}) - \nabla L(x_t)\Big) \Big \}\\& 
    -4 \Big \{\sum_{i=1}^n  
     \frac{1}{n\rho} \big(L(x_t + \rho v_{i}) - L(x_t)\big) \Big \} \times \Big \{ \sum_{j=1}^n  \frac{1}{n\rho}  \Big(\nabla L(x_t + \rho v_{j}) - \nabla L(x_t)\Big) \Big \}.
\end{align*}
If we take a closer look at this, we observe that
\begin{align*}
    g_t = \nabla L(x_t)  + \frac{4}{\rho^2} \widehat{\text{cov}}\Big (
    \big(L(x_t + \rho v) - L(x_t)\big)
    ,
    \big(\nabla L(x_t + \rho v) - \nabla L(x_t)\big) 
    \Big),
\end{align*}
where $\widehat{\text{cov}}$ denotes the (biased) empirical cross-covariance between the scalar random variable $L(x_t + \rho v) - L(x_t)$ and the vector-values random variable $\nabla L(x_t + \rho v) - \nabla L(x_t)$, for $v \sim \calN(0,I_d)$. Since the covariance is not sensitive to the means of random variables/vectors, we can further simply the update rule to
\begin{align*}
    g_t = \nabla L(x_t)  + \frac{4}{\rho^2}  \widehat{\text{cov}}\Big ( 
    L(x_t + \rho v)
    ,
    \nabla L(x_t + \rho v) 
    \Big).
\end{align*}
We can further replace the unbiased estimator of the cross-covariance instead of $\widehat{\text{cov}}$ which leads to \cref{algo-frob}.

\begin{algorithm*}[!h]
   \caption{ 
   \textit{Frob-SAM}}
\begin{algorithmic}\label{algo-frob}
   \STATE {\bfseries Input:} Training loss $L(x)$, step size $\eta$, perturbation parameter $\rho$, number of samples $n$, 
   \STATE {\bfseries Output:} Model parameters $x_t$ trained with Frobenius SAM 
   \STATE Initialization: $x \gets x_0$ and  $t \gets 0$
   \WHILE{1}
   \STATE Sample $v_{i} \overset{\text{i.i.d.}}{\sim} \calN(0,I_d)$ for any $i\in [n]$
    \STATE Compute the following: \vspace{-0.5cm} \begin{align*}
       g_t = \nabla L(x_t)  & +4 \sum_{i=1}^n  \frac{1}{(n-1)\rho^2} L(x_t+\rho v_{i}) \nabla L(x_t + \rho v_{i})  \\& 
   -4 \sum_{i=1}^n  
     \frac{1}{(n-1)\rho} L(x_t + \rho v_{i})   \times \sum_{i=1}^n  \frac{1}{n\rho}  \nabla L(x_t + \rho v_{i}).
    \end{align*}
    \vspace{-0.5cm}
    \STATE Update the parameters: $x_{t+1} = x_t - \eta g_t$
    \STATE $t \gets t+1$
   \ENDWHILE
   \end{algorithmic}
\end{algorithm*}

Moreover, achieving \textit{Det-SAM} is also similar to \textit{Frob-SAM}, but the only difficulty is that it involves computing an integral with respect to the Lebesgue measure which can be challenging (\cref{sample-table}). To address this issue, we instead sample a point from the hypercube $[-t, t]^d$ for a hyperparameter $t \in \mathbb{R}$ to approximate the Lebesgue measure.

\section{Experiments}

\begin{figure}[!t]
    \centering
    \includegraphics[width=0.48\textwidth]{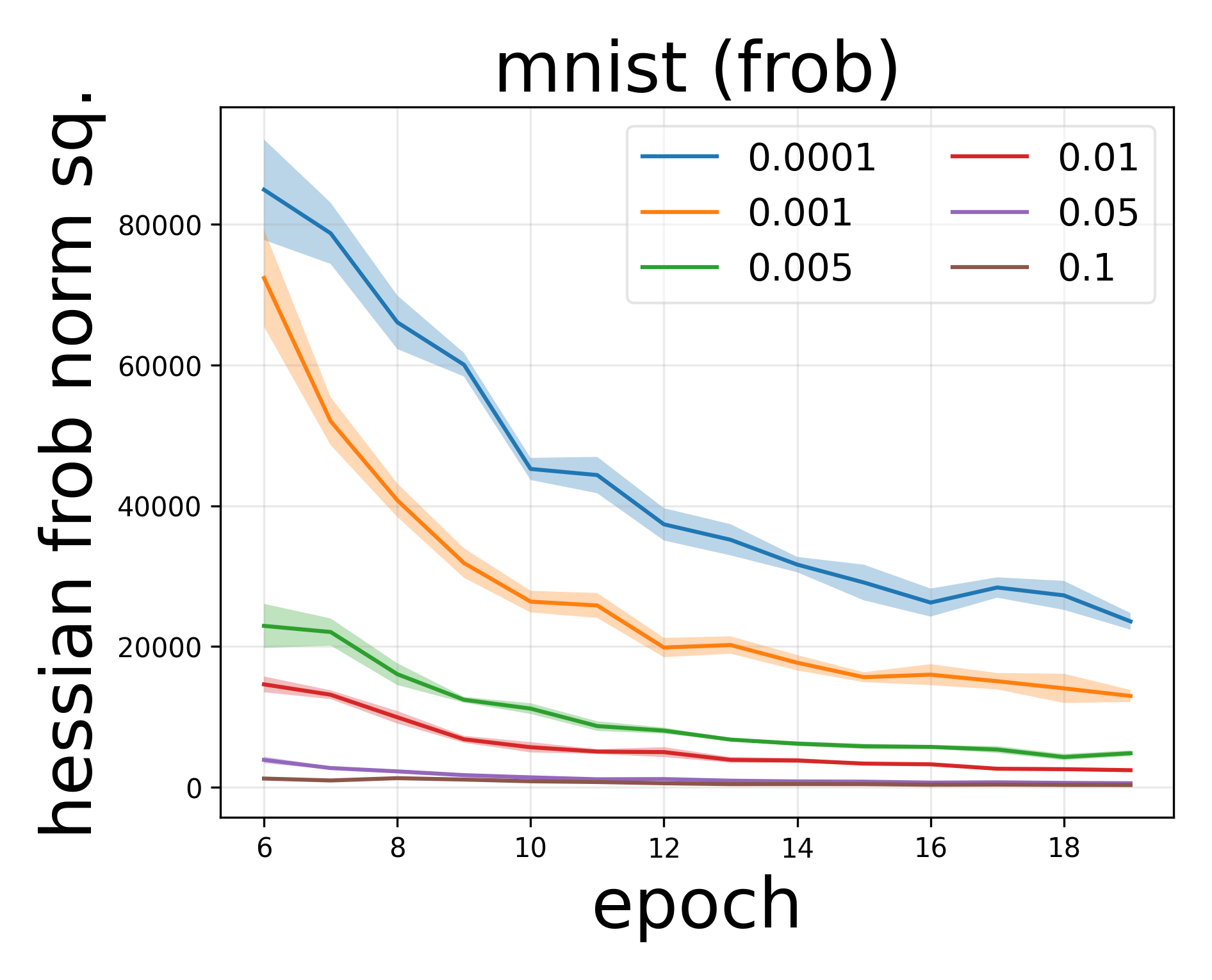}
    \caption{Sharpness measure while training on MNIST for different amounts of regularization $\lambda$. When training with Frob-SAM ($\rho = 0.01$, $n = 2$), the Frobenius norm of the Hessian decreases throughout training and larger $\lambda$ results in lower values of the norm. One standard error bar is shaded. All $\lambda$'s achieve over 94\% final test accuracy.}
    \label{fig:mnist}
\end{figure}

\begin{table*}[!t]
\setlength{\extrarowheight}{0.2em}
\centering
\begin{tabular}{r||ccc|ccc}
          & CIFAR10         & CIFAR100        & SVHN            & CIFAR10-S       & CIFAR100-S      & SVHN-S          \\ 
          \hline \hline Frob  & $94.96^{\pm0.05}$   & $77.16^{\pm0.16}$   & $96.26^{\pm0.05}$   & $\textbf{74.49}^{\pm1.15}$ & $\textbf{38.30}^{\pm0.89}$ & $89.23^{\pm0.32}$   \\
Trace & $95.04^{\pm0.07}$   & $77.61^{\pm0.08}$   & $96.37^{\pm0.04}$   & $\textbf{74.46}^{\pm0.39}$ & $\textbf{37.50}^{\pm0.78}$ & $\textbf{89.90}^{\pm0.32}$ \\
Det   & $95.10^{\pm0.03}$   & $77.64^{\pm0.12}$   & $96.32^{\pm0.05}$   & $\textbf{74.22}^{\pm0.80}$ & $\textbf{37.48}^{\pm0.42}$ & $\textbf{89.85}^{\pm0.13}$ \\
SSAM      & $\textbf{95.78}^{\pm0.05}$ & $78.42^{\pm0.19}$   & $\textbf{96.49}^{\pm0.04}$ & $73.39^{\pm0.48}$   & $35.33^{\pm0.55}$   & $\textbf{89.87}^{\pm0.27}$ \\
ASAM      & $95.49^{\pm0.09}$   & $\textbf{78.91}^{\pm0.06}$ & $96.18^{\pm0.05}$   & $\textbf{74.80}^{\pm0.64}$ & $\textbf{37.40}^{\pm0.66}$ & $89.21^{\pm0.11}$   \\
SAM       & $95.54^{\pm0.06}$   & $78.62^{\pm0.02}$   & $\textbf{96.45}^{\pm0.03}$ & $72.96^{\pm0.77}$   & $36.87^{\pm0.47}$   & $\textbf{89.80}^{\pm0.19}$ \\
SGD       & $94.81^{\pm0.07}$   & $77.00^{\pm0.11}$   & $96.06^{\pm0.08}$   & $\textbf{74.98}^{\pm0.92}$ & $\textbf{37.64}^{\pm0.88}$ & $\textbf{89.89}^{\pm0.27}$ \\ \hline
\end{tabular}
\caption{Final test accuracy and standard errors for the full and 10\% sub-sampled datasets. Entries within two standard errors of the best are bolded. Our biases particularly help when data is limited. For example, Frob-SAM achieves over 1.5\% higher test accuracy than SAM on CIFAR10-S.}
\label{table:full_and_subsampled}
\end{table*}

\begin{table}[!t]
\setlength{\extrarowheight}{0.2em}
\centering
\begin{tabular}{r||ccc}
\multicolumn{1}{l||}{} & CIFAR10-C       & CIFAR100-C      & SVHN-C          \\ \hline \hline
Frob              & $\textbf{87.57}^{\pm0.10}$ & $65.73^{\pm0.22}$   & $90.11^{\pm0.20}$   \\
Trace             & $87.20^{\pm0.12}$   & $65.29^{\pm0.12}$   & $\textbf{91.01}^{\pm0.44}$ \\
Det               & $83.61^{\pm0.06}$   & $64.73^{\pm0.29}$   & $\textbf{90.56}^{\pm0.09}$ \\
SSAM                  & $86.24^{\pm0.08}$   & $67.06^{\pm0.14}$   & $\textbf{90.92}^{\pm0.12}$ \\
ASAM                  & $85.88^{\pm0.12}$   & $\textbf{67.78}^{\pm0.04}$ & $89.54^{\pm0.12}$   \\
SAM                   & $85.42^{\pm0.08}$   & $66.12^{\pm0.13}$   & $\textbf{90.66}^{\pm0.16}$ \\
SGD                   & $82.78^{\pm0.20}$   & $64.16^{\pm0.22}$   & $87.78^{\pm0.16}$ \\
\hline
\end{tabular}
\caption{Final test accuracy and standard errors when the labels of 20\% of the training examples are corrupted. Entries within two standard errors of the best are bolded. We see that our bias can provide a boost in this scenario. For example, Frob-SAM achieves over 2\% higher accuracy than SAM on CIFAR10-C.}
\label{table:corrupt}
\end{table}

The goal of our experiments is twofold. Firstly, we validate Theorem \ref{theorembias} by showing that minimization of the sharpness-aware loss defined in Definition \ref{def:sharp} and codified in \cref{algo} has the explicit bias of minimizing the sharpness measure. Secondly, we show our method is useful in practice by evaluating it on benchmark vision tasks. Our code is available at \url{https://github.com/dbahri/universal_sam}.

\subsection{Setup}
We evaluate on three vision datasets: CIFAR10, CIFAR100, and SVHN. Futhermore, we study how our \emph{explicit} bias may be helpful in settings that generally benefit from regularization -- specifically, when \emph{training data is limited} and when \emph{training labels are noisy}. For the former, we artificially sub-sample each original dataset, keeping only the first 10\% of training samples, and we denote these sub-sampled datasets with ``-S'' (i.e. CIFAR10-S). For the latter, we choose a random 20\% of training samples to corrupt, and we corrupt these samples by flipping their labels to a different label chosen uniformly at random over the remaining classes. We denote these datasets by ``--C'' (i.e. CIFAR10-C).

Full experimental details are deferred to \cref{sec:full_experiments}; we summarize them here. We train ResNet18~\citep{he2016deep} on the datasets using momentum-SGD and a multi-step learning rate schedule. We run each experiment under four different random seeds. We evaluate three sharpness measures and the following other baselines:

\textbf{Frob-SAM}/\textbf{Trace-SAM}/\textbf{Det-SAM}. These correspond to instances of our algorithm where the measure is, respectively, the Frobenius norm, trace, and determinant of the Hessian. For Det-SAM, we set $t$, half the edge width of the approximating hypercube, to 0.01 with little tuning.

\textbf{SAM}. We set $\rho$ to 0.05/0.1/0.05 for CIFAR10/CIFAR100/SVHN, following \citet{foret2020sharpness}.

\textbf{Adaptive SAM (ASAM)}. \citet{kwon2021asam} proposes a modification of SAM that is scale-invariant. We set $\rho$ to 0.5/1/0.5 and $\eta$ to 0.01/0.1/0.01 for CIFAR10/CIFAR100/SVHN.

\textbf{Sparse SAM (SSAM)}. \citet{mi2022make} speeds up and improves the performance of SAM by only perturbing important parameters, as determined via Fisher information and sparse dynamic training. We use SSAM-F with $\rho$ set to 0.1/0.2/0.1 for CIFAR10/CIFAR100/SVHN. 50\% sparsity is used with 16 samples.

Furthermore, to demonstrate that minimization of our loss has the explicit bias we expect, we train a simple 6-layer ReLU network with 128 hidden units on MNIST using momentum-SGD (with momentum 0.9 and learning rate 0.001) for 20 epochs. We estimate the Frobenius norm of the Hessian (see \cref{sec:full_experiments} for details) throughout training for Frob-SAM, under a range of regularization strengths $\lambda$. 

\subsection{Results}
MNIST results are shown in \cref{fig:mnist}. We see that for Frob-SAM, minimization of the loss leads to reduction in the corresponding sharpness measure and the reduction expectantly scales proportionally with regularization strength $\lambda$.

Results for the main datasets are shown in \cref{table:full_and_subsampled} and \cref{table:corrupt}. We find that our method nearly always outperforms SGD and is at or sometimes above par with SAM and alternatives, especially in the noisy label and limited data settings. For example, for noisy label CIFAR10 (CIFAR10-C), Frob-SAM achieves nearly 5\% and 2\% higher final test accuracy than SGD and SAM respectively. Our findings suggest that the explicit biases we propose can be practically useful, especially in the noisy label and limited data scenarios, though it is often unclear what the \emph{best} bias for a particular task is, a priori.

\section{Conclusion}
In this paper, we introduce a new family of sharpness measures and demonstrate how this parameterized representation can generate many meaningful sharpness notions (Table \ref{sample-table}). These measures are indeed universally expressive. Furthermore, in Theorem \ref{theorembias}, we illustrate how the corresponding zeroth-order objective function for each sharpness measure is explicitly biased towards minimizing the sharpness of the training loss. Moreover, in Theorem \ref{theoremscale}, we prove how specific (Borel) measures can lead to scale-invariant sharpness measures, such as the determinant of the Hessian matrix. We conclude the paper with a series of numerical experiments showcasing the efficacy of the proposed loss function on various practical datasets. Given the broad class of sharpness measures we proposed, an interesting future direction is to evaluate in practice which sharpness measure/algorithm performs the best for different dataset. Another interesting attitude is, given the universally expressivity of our algorithm, to meta-learn the sharpness measures on the top of the model. This could potentially yield the identification of optimal, intricate sharpness measures tailored to specific datasets, thereby contributing to unraveling hidden secrets of overparameterized models.

\section*{Acknowledgments}
The authors appreciate Joshua Robinson for his insightful comments and valuable suggestions.
BT and SJ are supported by the Office of Naval Research award N00014-20-1-2023 (MURI ML-SCOPE), NSF award CCF-2112665 (TILOS AI Institute), NSF award 2134108, and the Alexander von Humboldt Foundation.
 AS and PJ  are supported by the National Research Foundation Singapore and DSO National Laboratories under the AI Singapore Programme (AISG Award No: AISG2-RP-2020-018).


\bibliography{sharpness}

\begin{thebibliography}{62}
\providecommand{\natexlab}[1]{#1}
\providecommand{\url}[1]{\texttt{#1}}
\expandafter\ifx\csname urlstyle\endcsname\relax
  \providecommand{\doi}[1]{doi: #1}\else
  \providecommand{\doi}{doi: \begingroup \urlstyle{rm}\Url}\fi

\bibitem[Agarwala and Dauphin(2023)]{agarwala2023sam}
Atish Agarwala and Yann Dauphin.
\newblock {SAM} operates far from home: eigenvalue regularization as a
  dynamical phenomenon.
\newblock In \emph{Int. Conference on Machine Learning (ICML)}, 2023.

\bibitem[Andriushchenko and Flammarion(2022)]{andriushchenko2022towards}
Maksym Andriushchenko and Nicolas Flammarion.
\newblock Towards understanding sharpness-aware minimization.
\newblock In \emph{Int. Conference on Machine Learning (ICML)}, 2022.

\bibitem[Andriushchenko et~al.(2023{\natexlab{a}})Andriushchenko, Bahri,
  Mobahi, and Flammarion]{andriushchenko2023sharpness}
Maksym Andriushchenko, Dara Bahri, Hossein Mobahi, and Nicolas Flammarion.
\newblock Sharpness-aware minimization leads to low-rank features.
\newblock \emph{arXiv preprint arXiv:2305.16292}, 2023{\natexlab{a}}.

\bibitem[Andriushchenko et~al.(2023{\natexlab{b}})Andriushchenko, Croce,
  M{\"u}ller, Hein, and Flammarion]{andriushchenko2023modern}
Maksym Andriushchenko, Francesco Croce, Maximilian M{\"u}ller, Matthias Hein,
  and Nicolas Flammarion.
\newblock A modern look at the relationship between sharpness and
  generalization.
\newblock In \emph{Int. Conference on Machine Learning (ICML)},
  2023{\natexlab{b}}.

\bibitem[Arora et~al.(2022)Arora, Li, and Panigrahi]{arora2022understanding}
Sanjeev Arora, Zhiyuan Li, and Abhishek Panigrahi.
\newblock Understanding gradient descent on the edge of stability in deep
  learning.
\newblock In \emph{Int. Conference on Machine Learning (ICML)}, 2022.

\bibitem[Azizan and Hassibi(2019)]{azizan2018stochastic}
Navid Azizan and Babak Hassibi.
\newblock Stochastic gradient/mirror descent: Minimax optimality and implicit
  regularization.
\newblock In \emph{Int. Conference on Learning Representations (ICLR)}, 2019.

\bibitem[Bahri et~al.(2022)Bahri, Mobahi, and Tay]{bahri2022sharpness}
Dara Bahri, Hossein Mobahi, and Yi~Tay.
\newblock Sharpness-aware minimization improves language model generalization.
\newblock In \emph{Proceedings of the Annual Meeting of the Association for
  Computational Linguistics}, 2022.

\bibitem[Bartlett et~al.(2022)Bartlett, Long, and
  Bousquet]{bartlett2022dynamics}
Peter~L. Bartlett, Philip~M. Long, and Olivier Bousquet.
\newblock The dynamics of sharpness-aware minimization: Bouncing across ravines
  and drifting towards wide minima.
\newblock \emph{arXiv preprint arXiv:2210.01513}, 2022.

\bibitem[Behdin and Mazumder(2023)]{behdin2023statistical}
Kayhan Behdin and Rahul Mazumder.
\newblock On statistical properties of sharpness-aware minimization: Provable
  guarantees.
\newblock \emph{arXiv preprint arXiv:2302.11836}, 2023.

\bibitem[Behdin et~al.(2022)Behdin, Song, Gupta, Durfee, Acharya, Keerthi, and
  Mazumder]{behdin2022improved}
Kayhan Behdin, Qingquan Song, Aman Gupta, David Durfee, Ayan Acharya, Sathiya
  Keerthi, and Rahul Mazumder.
\newblock Improved deep neural network generalization using m-sharpness-aware
  minimization.
\newblock \emph{arXiv preprint arXiv:2212.04343}, 2022.

\bibitem[Blanc et~al.(2020)Blanc, Gupta, Valiant, and
  Valiant]{blanc2020implicit}
Guy Blanc, Neha Gupta, Gregory Valiant, and Paul Valiant.
\newblock Implicit regularization for deep neural networks driven by an
  ornstein-uhlenbeck like process.
\newblock In \emph{Conference on Learning Theory (COLT)}, 2020.

\bibitem[Cha et~al.(2021)Cha, Chun, Lee, Cho, Park, Lee, and Park]{cha2021swad}
Junbum Cha, Sanghyuk Chun, Kyungjae Lee, Han-Cheol Cho, Seunghyun Park, Yunsung
  Lee, and Sungrae Park.
\newblock Swad: Domain generalization by seeking flat minima.
\newblock In \emph{Advances in Neural Information Processing Systems
  (NeurIPS)}, 2021.

\bibitem[Chaudhari et~al.(2019)Chaudhari, Choromanska, Soatto, Lecun, Baldassi,
  Borgs, Chayes, Sagun, and Zecchina]{chaudhari2019entropy}
Pratik Chaudhari, Anna Choromanska, Stefano Soatto, Yann Lecun, Carlo Baldassi,
  Christian Borgs, Jennifer Chayes, Levent Sagun, and Riccardo Zecchina.
\newblock Entropy-sgd: Biasing gradient descent into wide valleys.
\newblock \emph{Journal of Statistical Mechanics: Theory and Experiment},
  2019\penalty0 (12):\penalty0 124018, 2019.

\bibitem[Compagnoni et~al.(2023)Compagnoni, Biggio, Orvieto, Proske, Kersting,
  and Lucchi]{compagnoni2023sde}
Enea~Monzio Compagnoni, Luca Biggio, Antonio Orvieto, Frank~Norbert Proske,
  Hans Kersting, and Aurelien Lucchi.
\newblock An {SDE} for modeling sam: Theory and insights.
\newblock In \emph{Int. Conference on Machine Learning (ICML)}, 2023.

\bibitem[Damian et~al.(2021)Damian, Ma, and Lee]{damian2021label}
Alex Damian, Tengyu Ma, and Jason~D Lee.
\newblock Label noise {SGD} provably prefers flat global minimizers.
\newblock In \emph{Advances in Neural Information Processing Systems
  (NeurIPS)}, 2021.

\bibitem[Dinh et~al.(2017)Dinh, Pascanu, Bengio, and Bengio]{dinh2017sharp}
Laurent Dinh, Razvan Pascanu, Samy Bengio, and Yoshua Bengio.
\newblock Sharp minima can generalize for deep nets.
\newblock In \emph{Int. Conference on Machine Learning (ICML)}, 2017.

\bibitem[Du et~al.(2022)Du, Zhou, Feng, Tan, and Zhou]{du2022sharpness}
Jiawei Du, Daquan Zhou, Jiashi Feng, Vincent Tan, and Joey~Tianyi Zhou.
\newblock Sharpness-aware training for free.
\newblock In \emph{Advances in Neural Information Processing Systems
  (NeurIPS)}, 2022.

\bibitem[Foret et~al.(2021)Foret, Kleiner, Mobahi, and
  Neyshabur]{foret2020sharpness}
Pierre Foret, Ariel Kleiner, Hossein Mobahi, and Behnam Neyshabur.
\newblock Sharpness-aware minimization for efficiently improving
  generalization.
\newblock In \emph{Int. Conference on Learning Representations (ICLR)}, 2021.

\bibitem[Gunasekar et~al.(2018{\natexlab{a}})Gunasekar, Lee, Soudry, and
  Srebro]{gunasekar2018characterizing}
Suriya Gunasekar, Jason Lee, Daniel Soudry, and Nathan Srebro.
\newblock Characterizing implicit bias in terms of optimization geometry.
\newblock In \emph{Int. Conference on Machine Learning (ICML)},
  2018{\natexlab{a}}.

\bibitem[Gunasekar et~al.(2018{\natexlab{b}})Gunasekar, Lee, Soudry, and
  Srebro]{gunasekar2018implicit}
Suriya Gunasekar, Jason~D Lee, Daniel Soudry, and Nati Srebro.
\newblock Implicit bias of gradient descent on linear convolutional networks.
\newblock In \emph{Advances in Neural Information Processing Systems
  (NeurIPS)}, 2018{\natexlab{b}}.

\bibitem[He et~al.(2016)He, Zhang, Ren, and Sun]{he2016deep}
Kaiming He, Xiangyu Zhang, Shaoqing Ren, and Jian Sun.
\newblock Deep residual learning for image recognition.
\newblock In \emph{Proceedings of the IEEE conference on computer vision and
  pattern recognition}, pages 770--778, 2016.

\bibitem[Hochreiter and Schmidhuber(1997)]{hochreiter1997flat}
Sepp Hochreiter and J{\"u}rgen Schmidhuber.
\newblock Flat minima.
\newblock \emph{Neural computation}, 9\penalty0 (1):\penalty0 1--42, 1997.

\bibitem[Izmailov et~al.(2018)Izmailov, Podoprikhin, Garipov, Vetrov, and
  Wilson]{izmailov2018averaging}
Pavel Izmailov, Dmitrii Podoprikhin, Timur Garipov, Dmitry Vetrov, and
  Andrew~Gordon Wilson.
\newblock Averaging weights leads to wider optima and better generalization.
\newblock In \emph{Conference on Uncertainty in Artificial Intelligence (UAI)},
  2018.

\bibitem[Jang et~al.(2022)Jang, Lee, Park, and Noh]{jang2022reparametrization}
Cheongjae Jang, Sungyoon Lee, Frank Park, and Yung-Kyun Noh.
\newblock A reparametrization-invariant sharpness measure based on information
  geometry.
\newblock In \emph{Advances in Neural Information Processing Systems
  (NeurIPS)}, 2022.

\bibitem[Ji and Telgarsky(2019{\natexlab{a}})]{ji2018gradient}
Ziwei Ji and Matus Telgarsky.
\newblock Gradient descent aligns the layers of deep linear networks.
\newblock In \emph{Int. Conference on Learning Representations (ICLR)},
  2019{\natexlab{a}}.

\bibitem[Ji and Telgarsky(2019{\natexlab{b}})]{ji2019implicit}
Ziwei Ji and Matus Telgarsky.
\newblock The implicit bias of gradient descent on nonseparable data.
\newblock In \emph{Conference on Learning Theory (COLT)}, 2019{\natexlab{b}}.

\bibitem[Jiang et~al.(2020)Jiang, Neyshabur, Mobahi, Krishnan, and
  Bengio]{jiang2019fantastic}
Yiding Jiang, Behnam Neyshabur, Hossein Mobahi, Dilip Krishnan, and Samy
  Bengio.
\newblock Fantastic generalization measures and where to find them.
\newblock In \emph{Int. Conference on Learning Representations (ICLR)}, 2020.

\bibitem[Kaddour et~al.(2022)Kaddour, Liu, Silva, and Kusner]{kaddour2022flat}
Jean Kaddour, Linqing Liu, Ricardo Silva, and Matt~J Kusner.
\newblock When do flat minima optimizers work?
\newblock In \emph{Advances in Neural Information Processing Systems
  (NeurIPS)}, 2022.

\bibitem[Keskar et~al.(2017)Keskar, Mudigere, Nocedal, Smelyanskiy, and
  Tang]{keskar2016large}
Nitish~Shirish Keskar, Dheevatsa Mudigere, Jorge Nocedal, Mikhail Smelyanskiy,
  and Ping Tak~Peter Tang.
\newblock On large-batch training for deep learning: Generalization gap and
  sharp minima.
\newblock In \emph{Int. Conference on Learning Representations (ICLR)}, 2017.

\bibitem[Kim et~al.(2023)Kim, Park, Choi, Lee, and Lee]{kim2023exploring}
Hoki Kim, Jinseong Park, Yujin Choi, Woojin Lee, and Jaewook Lee.
\newblock Exploring the effect of multi-step ascent in sharpness-aware
  minimization.
\newblock \emph{arXiv preprint arXiv:2302.10181}, 2023.

\bibitem[Kim et~al.(2022)Kim, Li, Hu, and Hospedales]{kim2022fisher}
Minyoung Kim, Da~Li, Shell~X Hu, and Timothy Hospedales.
\newblock Fisher sam: Information geometry and sharpness aware minimisation.
\newblock In \emph{Int. Conference on Machine Learning (ICML)}, 2022.

\bibitem[Kwon et~al.(2021)Kwon, Kim, Park, and Choi]{kwon2021asam}
Jungmin Kwon, Jeongseop Kim, Hyunseo Park, and In~Kwon Choi.
\newblock {ASAM}: Adaptive sharpness-aware minimization for scale-invariant
  learning of deep neural networks.
\newblock In \emph{Int. Conference on Machine Learning (ICML)}, 2021.

\bibitem[Lawrence et~al.(2022)Lawrence, Georgiev, Dienes, and
  Kiani]{lawrence2021implicit}
Hannah Lawrence, Kristian Georgiev, Andrew Dienes, and Bobak~T Kiani.
\newblock Implicit bias of linear equivariant networks.
\newblock In \emph{Int. Conference on Machine Learning (ICML)}, 2022.

\bibitem[Le and Jegelka(2022)]{le2022training}
Thien Le and Stefanie Jegelka.
\newblock Training invariances and the low-rank phenomenon: beyond linear
  networks.
\newblock In \emph{Int. Conference on Learning Representations (ICLR)}, 2022.

\bibitem[Li et~al.(2022{\natexlab{a}})Li, Wang, and Arora]{li2021happens}
Zhiyuan Li, Tianhao Wang, and Sanjeev Arora.
\newblock What happens after sgd reaches zero loss?--a mathematical framework.
\newblock In \emph{Int. Conference on Learning Representations (ICLR)},
  2022{\natexlab{a}}.

\bibitem[Li et~al.(2022{\natexlab{b}})Li, Wang, and Yu]{li2022fast}
Zhiyuan Li, Tianhao Wang, and Dingli Yu.
\newblock Fast mixing of stochastic gradient descent with normalization and
  weight decay.
\newblock In \emph{Advances in Neural Information Processing Systems
  (NeurIPS)}, 2022{\natexlab{b}}.

\bibitem[Liang et~al.(2019)Liang, Poggio, Rakhlin, and Stokes]{liang2019fisher}
Tengyuan Liang, Tomaso Poggio, Alexander Rakhlin, and James Stokes.
\newblock Fisher-rao metric, geometry, and complexity of neural networks.
\newblock In \emph{Int. Conference on Artificial Intelligence and Statistics
  (AISTATS)}, 2019.

\bibitem[Lim et~al.(2023)Lim, Robinson, Zhao, Smidt, Sra, Maron, and
  Jegelka]{lim2022sign}
Derek Lim, Joshua Robinson, Lingxiao Zhao, Tess Smidt, Suvrit Sra, Haggai
  Maron, and Stefanie Jegelka.
\newblock Sign and basis invariant networks for spectral graph representation
  learning.
\newblock In \emph{Int. Conference on Learning Representations (ICLR)}, 2023.

\bibitem[Liu et~al.(2022{\natexlab{a}})Liu, Mai, Chen, Hsieh, and
  You]{liu2022towards}
Yong Liu, Siqi Mai, Xiangning Chen, Cho-Jui Hsieh, and Yang You.
\newblock Towards efficient and scalable sharpness-aware minimization.
\newblock In \emph{IEEE Conference on Computer Vision and Pattern Recognition
  (CVPR)}, 2022{\natexlab{a}}.

\bibitem[Liu et~al.(2022{\natexlab{b}})Liu, Mai, Cheng, Chen, Hsieh, and
  You]{liu2022random}
Yong Liu, Siqi Mai, Minhao Cheng, Xiangning Chen, Cho-Jui Hsieh, and Yang You.
\newblock Random sharpness-aware minimization.
\newblock In \emph{Advances in Neural Information Processing Systems
  (NeurIPS)}, 2022{\natexlab{b}}.

\bibitem[Long and Bartlett(2023)]{long2023sharpnessaware}
Philip~M. Long and Peter~L. Bartlett.
\newblock Sharpness-aware minimization and the edge of stability.
\newblock \emph{arXiv preprint arXiv:2309.12488}, 2023.

\bibitem[Lu et~al.(2022)Lu, Kobyzev, Rezagholizadeh, Rashid, Ghodsi, and
  Langlais]{lu2022improving}
Peng Lu, Ivan Kobyzev, Mehdi Rezagholizadeh, Ahmad Rashid, Ali Ghodsi, and
  Philippe Langlais.
\newblock Improving generalization of pre-trained language models via
  stochastic weight averaging.
\newblock \emph{arXiv preprint arXiv:2212.05956}, 2022.

\bibitem[Lyu et~al.(2022)Lyu, Li, and Arora]{lyu2022understanding}
Kaifeng Lyu, Zhiyuan Li, and Sanjeev Arora.
\newblock Understanding the generalization benefit of normalization layers:
  Sharpness reduction.
\newblock In \emph{Advances in Neural Information Processing Systems
  (NeurIPS)}, 2022.

\bibitem[Madry et~al.(2018)Madry, Makelov, Schmidt, Tsipras, and
  Vladu]{madry2017towards}
Aleksander Madry, Aleksandar Makelov, Ludwig Schmidt, Dimitris Tsipras, and
  Adrian Vladu.
\newblock Towards deep learning models resistant to adversarial attacks.
\newblock In \emph{Int. Conference on Learning Representations (ICLR)}, 2018.

\bibitem[Mi et~al.(2022)Mi, Shen, Ren, Zhou, Sun, Ji, and Tao]{mi2022make}
Peng Mi, Li~Shen, Tianhe Ren, Yiyi Zhou, Xiaoshuai Sun, Rongrong Ji, and
  Dacheng Tao.
\newblock Make sharpness-aware minimization stronger: A sparsified perturbation
  approach.
\newblock In \emph{Advances in Neural Information Processing Systems
  (NeurIPS)}, 2022.

\bibitem[Neyshabur et~al.(2017)Neyshabur, Bhojanapalli, McAllester, and
  Srebro]{neyshabur2017exploring}
Behnam Neyshabur, Srinadh Bhojanapalli, David McAllester, and Nati Srebro.
\newblock Exploring generalization in deep learning.
\newblock In \emph{Advances in Neural Information Processing Systems
  (NeurIPS)}, 2017.

\bibitem[Nitanda et~al.(2023)Nitanda, Kikuchi, and Maeda]{nitanda2023parameter}
Atsushi Nitanda, Ryuhei Kikuchi, and Shugo Maeda.
\newblock Parameter averaging for sgd stabilizes the implicit bias towards flat
  regions.
\newblock \emph{arXiv preprint arXiv:2302.09376}, 2023.

\bibitem[Qu et~al.(2022)Qu, Li, Duan, Liu, Tang, and Lu]{qu2022generalized}
Zhe Qu, Xingyu Li, Rui Duan, Yao Liu, Bo~Tang, and Zhuo Lu.
\newblock Generalized federated learning via sharpness aware minimization.
\newblock In \emph{Int. Conference on Machine Learning (ICML)}, 2022.

\bibitem[Shi et~al.(2023)Shi, Liu, Wei, Shen, Wang, and Tao]{shi2023make}
Yifan Shi, Yingqi Liu, Kang Wei, Li~Shen, Xueqian Wang, and Dacheng Tao.
\newblock Make landscape flatter in differentially private federated learning.
\newblock In \emph{IEEE Conference on Computer Vision and Pattern Recognition
  (CVPR)}, 2023.

\bibitem[Soudry et~al.(2018)Soudry, Hoffer, Nacson, Gunasekar, and
  Srebro]{soudry2018implicit}
Daniel Soudry, Elad Hoffer, Mor~Shpigel Nacson, Suriya Gunasekar, and Nathan
  Srebro.
\newblock The implicit bias of gradient descent on separable data.
\newblock \emph{Journal of Machine Learning Research}, 2018.

\bibitem[Sun et~al.(2023)Sun, Shen, Zhong, Ding, Chen, Sun, Li, Sun, and
  Tao]{sun2023adasam}
Hao Sun, Li~Shen, Qihuang Zhong, Liang Ding, Shixiang Chen, Jingwei Sun, Jing
  Li, Guangzhong Sun, and Dacheng Tao.
\newblock Ada{SAM}: Boosting sharpness-aware minimization with adaptive
  learning rate and momentum for training deep neural networks.
\newblock \emph{arXiv preprint arXiv:2303.00565}, 2023.

\bibitem[Wang et~al.(2023)Wang, Zhang, Lei, and Zhang]{wang2023sharpness}
Pengfei Wang, Zhaoxiang Zhang, Zhen Lei, and Lei Zhang.
\newblock Sharpness-aware gradient matching for domain generalization.
\newblock In \emph{IEEE Conference on Computer Vision and Pattern Recognition
  (CVPR)}, 2023.

\bibitem[Wen et~al.(2023{\natexlab{a}})Wen, Ma, and Li]{wen2022does}
Kaiyue Wen, Tengyu Ma, and Zhiyuan Li.
\newblock How does sharpness-aware minimization minimize sharpness?
\newblock In \emph{Int. Conference on Learning Representations (ICLR)},
  2023{\natexlab{a}}.

\bibitem[Wen et~al.(2023{\natexlab{b}})Wen, Ma, and Li]{wen2023sharpness}
Kaiyue Wen, Tengyu Ma, and Zhiyuan Li.
\newblock Sharpness minimization algorithms do not only minimize sharpness to
  achieve better generalization.
\newblock \emph{arXiv preprint arXiv:2307.11007}, 2023{\natexlab{b}}.

\bibitem[Woodworth et~al.(2020)Woodworth, Gunasekar, Lee, Moroshko, Savarese,
  Golan, Soudry, and Srebro]{woodworth2020kernel}
Blake Woodworth, Suriya Gunasekar, Jason~D Lee, Edward Moroshko, Pedro
  Savarese, Itay Golan, Daniel Soudry, and Nathan Srebro.
\newblock Kernel and rich regimes in overparametrized models.
\newblock In \emph{Conference on Learning Theory (COLT)}, 2020.

\bibitem[Xu et~al.(2019)Xu, Hu, Leskovec, and Jegelka]{xu2018powerful}
Keyulu Xu, Weihua Hu, Jure Leskovec, and Stefanie Jegelka.
\newblock How powerful are graph neural networks?
\newblock In \emph{Int. Conference on Learning Representations (ICLR)}, 2019.

\bibitem[Yao et~al.(2020)Yao, Gholami, Keutzer, and Mahoney]{yao2020pyhessian}
Zhewei Yao, Amir Gholami, Kurt Keutzer, and Michael~W Mahoney.
\newblock Pyhessian: Neural networks through the lens of the hessian.
\newblock In \emph{2020 IEEE international conference on big data (Big data)},
  pages 581--590. IEEE, 2020.

\bibitem[Zaheer et~al.(2017)Zaheer, Kottur, Ravanbakhsh, Poczos, Salakhutdinov,
  and Smola]{zaheer2017deep}
Manzil Zaheer, Satwik Kottur, Siamak Ravanbakhsh, Barnabas Poczos, Russ~R
  Salakhutdinov, and Alexander~J Smola.
\newblock Deep sets.
\newblock In \emph{Advances in Neural Information Processing Systems
  (NeurIPS)}, 2017.

\bibitem[Zhao et~al.(2022)Zhao, Zhang, and Hu]{zhao2022randomized}
Yang Zhao, Hao Zhang, and Xiuyuan Hu.
\newblock Randomized sharpness-aware training for boosting computational
  efficiency in deep learning.
\newblock \emph{arXiv preprint arXiv:2203.09962}, 2022.

\bibitem[Zhong et~al.(2022)Zhong, Ding, Shen, Mi, Liu, Du, and
  Tao]{zhong2022improving}
Qihuang Zhong, Liang Ding, Li~Shen, Peng Mi, Juhua Liu, Bo~Du, and Dacheng Tao.
\newblock Improving sharpness-aware minimization with fisher mask for better
  generalization on language models.
\newblock \emph{arXiv preprint arXiv:2210.05497}, 2022.

\bibitem[Zhu et~al.(2023)Zhu, Wang, Wang, Zhou, and Ge]{zhu2022understanding}
Xingyu Zhu, Zixuan Wang, Xiang Wang, Mo~Zhou, and Rong Ge.
\newblock Understanding edge-of-stability training dynamics with a minimalist
  example.
\newblock In \emph{Int. Conference on Learning Representations (ICLR)}, 2023.

\bibitem[Zhuang et~al.(2022)Zhuang, Gong, Yuan, Cui, Adam, Dvornek, Tatikonda,
  Duncan, and Liu]{zhuang2022surrogate}
Juntang Zhuang, Boqing Gong, Liangzhe Yuan, Yin Cui, Hartwig Adam, Nicha
  Dvornek, Sekhar Tatikonda, James Duncan, and Ting Liu.
\newblock Surrogate gap minimization improves sharpness-aware training.
\newblock In \emph{Int. Conference on Learning Representations (ICLR)}, 2022.

\end{thebibliography}
\bibliographystyle{plainnat}

\newpage

\appendix

\onecolumn
\section{Additional Related Work} \label{app:related_work}

SAM can provide a strong regularization of the eigenvalues throughout the learning trajectory \citep{agarwala2023sam}.  \citet{bartlett2022dynamics} show that the dynamics of SAM similar to GD on the spectral norm of Hessian. \citet{compagnoni2023sde} propose an SDE for modeling SAM, while \citet{behdin2023statistical} study the statistical benefits of SAM (see also \citep{li2021happens} for a  general framework for the dynamics of SGD around the zero-loss manifold).  It is shown that SAM can reduce the feature rank (i.e., allowing learning low-rank features) \citep{andriushchenko2023sharpness}. \citet{blanc2020implicit} proved that SGD is implicitly biased toward minimizing the trace of Hessian.

\citet{kim2023exploring} proposes a multi-step ascent approach to improve SAM, while \citet{mi2022make} suggested sparsification of SAM. \citet{zhuang2022surrogate} improve  SAM   by changing the directions in the ascent step; their method is called  Surrogate Gap Guided Sharpness-Aware Minimization (GSAM) (see also \citep{behdin2022improved}). 
Random smoothing-based SAM (R-SAM) is another SAM variant that is proposed to reduce its computational complexity \citep{liu2022random} (see also \citep{du2022sharpness, liu2022towards, zhao2022randomized, sun2023adasam} for more).  Adaptive SAM (ASAM) is proposed for applying SAM on scale-invariant neural networks and has shown generalization benefits \citep{kwon2021asam}. \citet{li2022fast} also prove that scale-invariant loss functions allow faster mixing in function spaces for neural networks. \citet{lyu2022understanding}  show how normalization can make GD reduce the sharpness via a continuous sharpness-reduction flow. \citet{liang2019fisher} propose a capacity measure based on information geometry for parameter invariances in overparameterized models (for more on information geometry, see \citep{kim2022fisher, jang2022reparametrization}). \citet{jiang2019fantastic} empirically compare different complexity measures for overparameterized models.   \citet{keskar2016large} show how a large batch yields sharp minima but a small batch achieves flat minima.

Stochastic Weight Averaging (SWA) is another way to improve generalization and it relies on finding wider minima by averaging multiple points along the trajectory of SGD  \citep{izmailov2018averaging}.  (see e.g., \citep{lu2022improving} which uses this method for language models). See \citep{kaddour2022flat} for the empirical comparison between two popular flat-minima optimization approaches: SWA and SAM.

Learning with group invariant architectures has recently gained a lot of interest due to its applications in physics and biology; see e.g., deep sets \citep{zaheer2017deep}, Graph Neural Networks (GNNs) \citep{xu2018powerful}, and also sign-flips for spectral data \citep{lim2022sign}. These architectures are all owing their practical success to their specific parameter invariance.

It is also worth mentioning that neural networks trained with large learning rates often generalize better (the edge-of-stability regime); see \citep{arora2022understanding, zhu2022understanding, long2023sharpnessaware} for the theoretical understanding of this phenomenon.

\section{Examples of $(\phi,\psi,\mu)$-Sharpness Measures}\label{exmaple_section}
In this section, we prove various notions of sharpness can be achieved using the proposed approach in this paper (Table \ref{sample-table}). For the last row of Table \ref{sample-table}, we refer the reader to the proof of Theorem \ref{theoremuniv}.

\begin{itemize}
    \item \textbf{Trace.} Let $\phi(t)=\psi(t)=t$, and note that 
\begin{align}
    S(x;\phi,\psi,\mu)=&\int   \frac{1}{2}v^t \nabla^2L(x) v   d\mu(v) \\
    & = \frac{1}{2}\mathbb{E}_{v \sim \mu} [   v^t \nabla^2L(x)v], 
\end{align}
where $\mu$ is the uniform distribution over the $(d-1)$-sphere $\mathbb{S}^{(d-1)}:= \{ x \in \mathbb{R}^d: \|x\|_2 = 1\}$. Denote the entries of $\nabla^2L(x)$ as $(\nabla^2L(x))_{i,j}$. Then, by the linearity of expectation 
\begin{align}
    \mathbb{E}_{v \sim \mu} [   v^t \nabla^2L(x)] &= \sum_{i,j=1}^d (\nabla^2L(x))_{i,j}\mathbb{E}[v_iv_j] = \sum_{i=1}^d \frac{1}{d}(\nabla^2L(x))_{i,i}= \frac{1}{d}\tr(\nabla^2L(x)), 
\end{align}
since $\mathbb{E}[v_iv_j]  = \frac{1}{d} \delta_{i,j}$, where $\delta_{i,j}$ denotes the Knocker delta function.

\item \textbf{Determinant.} To achieve the determinant, we choose $\phi(t) = (2\pi)^{d}/t^2$ and $\psi(t) = \exp(-t)$. Then,
\begin{align}
    S(x;\phi,\psi,\mu)=& (2\pi)^d\Big (\int   \exp(-\frac{1}{2}v^t \nabla^2L(x) v)   dv \Big)^{-2}, 
\end{align}
where $dv$ denotes the Lebesgue measure. However, using the multivariate Gaussian integral, we have
\begin{align}
    \int   \exp\Big(-\frac{1}{2}v^t \nabla^2L(x) v\Big)   dv = (2\pi)^{d/2} \det(\nabla^2L(x))^{-1/2}.
\end{align}
Replacing this intro the definition of $S(x;\phi,\psi,\mu)$ gives the desired result.

\item 
\textbf{Polynomials of eigenvalues.} First assume that  $\psi(t) = t^n$ for some $n\ge 0$. Then,  for any function $\phi(t)$,
\begin{align}
    S(x;\phi,\psi,\mu)&=\phi \Big( \int   \big( \frac{1}{2} v^t \nabla^2L(x) v \big)^n d\mu(v)\Big) \\
    & = \phi \Big(\mathbb{E}_{v \sim \mu}\Big[  \big( \frac{1}{2}v^t \nabla^2L(x) v \big)^n \Big]\Big),
\end{align}
where $\mu$ is the uniform distribution over the $(d-1)$-sphere $\mathbb{S}^{(d-1)}$. Since $\nabla^2L(x)$ is a symmetric matrix, we can find an orthogonal matrix $Q$ such that $\nabla^2L(x) = Q^t D Q$, where $D$ is a diagonal matrix with diagonal entries $\lambda_1,\lambda_2,\ldots,\lambda_d$. Now we write 
$    \big( v^t \nabla^2L(x) v \big)^n = \big( v^tQ^t D Qv \big)^n$. But $Qv$ is distributed uniformly over the $(d-1)$-sphere $\mathbb{S}^{(d-1)}$, similar to $v$. Thus, we conclude
\begin{align}
    S(x;\phi,\psi,\mu)&=\phi \Big(\mathbb{E}_{v \sim \mu}\Big[  \big( \frac{1}{2}v^t \nabla^2L(x) v \big)^n \Big]\Big) \\
    & = \phi \Big(\mathbb{E}_{v \sim \mu}\Big[  \big( \frac{1}{2}\sum_{i=1}^d \lambda_i v_i^2 \big)^n \Big]\Big).
\end{align}
Define \begin{align}
    q(\lambda_1,\lambda_2,\ldots,\lambda_d):= \mathbb{E}_{v \sim \mu}\Big[  \big( \sum_{i=1}^d \frac{1}{2}\lambda_i v_i^2 \big)^n \Big], \label{eq_polynomial}
\end{align}
which is clearly a polynomial function (by the linearity of expectation).

Note that the above computation is still valid  if we replace the uniform distribution on hypersphere with the Gaussian multivariate distribution with identity covariance $\calN(0, I_d)$.  Indeed, let us compute this polynomial for $n = 2$ with Gaussian distribution. Note that 
\begin{align}
q(\lambda_1,\lambda_2,\ldots,\lambda_d)&= \mathbb{E}_{v \sim \mu}\Big[  \big( \sum_{i=1}^d \frac{1}{2}\lambda_i v_i^2 \big)^2 \Big] = \frac{1}{4}\sum_{i=1}^d\lambda_i^2 \E[Z^4]+ \sum_{i\neq j} \frac{1}{4}\lambda_i \lambda_j (\E[Z^2])^2\\
& = \frac{3}{4}\sum_{i=1}^d\lambda_i^2+ \frac{1}{4}\sum_{i\neq j} \lambda_i \lambda_j,
\end{align}
where $Z$ is a zero-mean Gaussian random variable with unit variance, and note that $E[Z^2] = 1$ and $E[Z^4] = 3$. 

Now if we take $m=2$, and $\psi(t) = (t, t^2)$, with $\mu = \calN(0,I_d) \otimes \calN(0,I_d)$,  we have that 
\begin{align}
     \int   \psi \big(\frac{1}{2}v^t \nabla^2L(x) v \big) d\mu(v) =  \Big( \frac{1}{2}\sum_{i=1}^d\lambda_i, \frac{3}{4}\sum_{i=1}^d\lambda_i^2+ \frac{1}{4}\sum_{i\neq j} \lambda_i \lambda_j \Big).
\end{align}
Finally, by taking $\phi(t_1,t_2) = 2(t_2-t_1^2)$, we obtain 
\begin{align}
\phi\Big(\int   \psi \big( \frac{1}{2}v^t \nabla^2L(x) v \big) d\mu(v)\Big) =   \sum_{i=1}^d\lambda_i^2. 
\end{align}
\end{itemize}

\section{Proof of Theorem \ref{theoremuniv}}\label{proof_theoremuniv}

\theoremuniv*

\begin{proof}

We explicitly construct the (Borel) probability measure $\mu$ and the function $\psi: \R^m \to \R$. Let us take $m=d$ to prove the universality theorem, while we believe lower $m$ should be enough for specific practical sharpness measures.

Indeed, let us consider $\mu$ to be the multivariate Gaussian probability measure with identity covariance matrix. Also, define a (parameterized) function $\psi_{\sigma}(t) = \exp(\sigma t)$, for some $\sigma$ to be set later.  We are interested to compute the following quantity:
\begin{align}
    \int  \psi_{\sigma} \big( \frac{1}{2}v^t \nabla^2L(x) v \big) d\mu(v).
\end{align}
Note that  $\mu$ is the standard Gaussian probability measure on $\R^d$, and specifically, it's invariant under the action of the  orthogonal matrices. Indeed, there exists an orthogonal matrix $Q$ such that $\nabla^2L(x) = Q^t \Lambda Q$, where $\Lambda$ is a diagonal matrix with diagonal entries $\lambda_1,\lambda_2,\ldots,\lambda_d$. Observe that $u:=Qv$ is also distributed according to the Gaussian probability distribution with the identity covariance matrix on $\R^d$, similar to $v$. Now we write 
\begin{align}
     \int  \psi_{\sigma} \big( \frac{1}{2}v^t \nabla^2L(x) v \big) d\mu(v) &= 
     \int  \psi_{\sigma} \big( \frac{1}{2}v^t Q^t \Lambda Q v \big) d\mu(v) \\
     &= \int  \psi_{\sigma} \big( \frac{1}{2} u^t  \Lambda u \big) d\mu(u)\\&
     = \int  \psi_{\sigma} \Big(\frac{1}{2}\sum_{i=1}^d \lambda_i u_i^2 \Big) d\mu(u) \\
     & = \int (2\pi)^{-d/2} \psi_{\sigma} \Big(\frac{1}{2}\sum_{i=1}^d \lambda_i u_i^2 \Big) \exp \Big(-\frac{1}{2}\sum_{i=1}^d u_i^2\Big)du \\
     & = \int (2\pi)^{-d/2} \exp \Big(\frac{1}{2}\sum_{i=1}^d \sigma \lambda_i u_i^2\Big) \exp \Big(-\frac{1}{2}\sum_{i=1}^d u_i^2\Big)du\\&
      = \int (2\pi)^{-d/2} \exp \Big( \sum_{i=1}^d \frac{1}{2}(\sigma \lambda_i-1)u_i^2\Big)du,
\end{align}
where $du$ denotes the Lebesgue measure on $\R^d$. Now to compute the integral, note that $du = du_1 \times  du_2 \times \ldots \times du_d$ is a product measure and the integrand also takes on a product form; thus,
\begin{align}
     \int  \psi_{\sigma} \big( \frac{1}{2}v^t \nabla^2L(x) v \big) d\mu(v) &= 
       \int (2\pi)^{-d/2} \exp \Big( \sum_{i=1}^d \frac{1}{2}(\sigma \lambda_i-1)u_i^2\Big)du \\&
      = \prod_{i=1}^d\int (2\pi)^{-1/2} \exp \Big( \frac{1}{2}(\sigma \lambda_i-1)u_i^2\Big)du_i\\&
      \overset{(a)}{=} \prod_{i=1}^d \frac{1}{\sqrt{1-\sigma\lambda_i}}
      \underbrace{\int \sqrt{ \frac{1-\sigma\lambda_i}{2\pi} }\exp \Big( \frac{1}{2}(\sigma \lambda_i-1)u_i^2\Big)du_i}_{=1}\\
      &= \prod_{i=1}^d\frac{1}{\sqrt{1-\sigma\lambda_i}},
\end{align}
where (a) holds by the Gaussian integral identities. 

Note that to calculate the integral above, we assumed that $1-\sigma\lambda_i >0$ for any $i \in [d]$. This is equivalent to having
\begin{align}
     \max_{\lambda_i <0} \lambda^{-1}_i < \sigma < \min_{\lambda_i >0} \lambda^{-1}_i.
\end{align}
Now, due to the assumption in the theorem,  we study the target sharpness measure $S(\lambda_1,\lambda_2,\ldots, \lambda_d)$ only on a compact domain $\calA \subseteq \R^d$, and this means that  there exists an open interval $I = (-\epsilon, \epsilon)$ with 
\begin{align}
    \epsilon: = \min _{\big(\lambda_1,\lambda_2, \ldots, \lambda_d \big)^t \in \calA}~  \min_{i} |\lambda_i|^{-1},
\end{align}
such that the above integral is well-defined and finite for all $\sigma \in I$. 

Let us define the function $\tilde{\phi}: \R^d \to \R^d$ as follows:
\begin{align}
    \tilde{\phi}(t_1,t_2,\ldots,t_d):= (t_1^{-2}, t_2^{-2},\ldots, t_d^{-2}).
\end{align}
Finally, consider the following polynomial in one variable with degree $d$:
\begin{align}
 p(x):= (1-\lambda_1 x)\times (1-\lambda_2 x)\times\cdots \times (1-\lambda_d x).
\end{align}

\begin{claim} For any $\sigma_i \in I$, $i \in [d]$, we have
\begin{align}
    \tilde{\phi}\Big (
    \int  \psi_{\sigma_1} \big(\frac{1}{2} v^t \nabla^2L(x) v \big) d\mu(v),
    \int  \psi_{\sigma_2} \big(\frac{1}{2} v^t \nabla^2L(x) v \big) d\mu(v),
    \ldots,&
    \int  \psi_{\sigma_d} \big( \frac{1}{2}v^t \nabla^2L(x) v \big) d\mu(v)
    \Big) &\\
    = \Big(p(\sigma_1), p(\sigma_2), \ldots, p(\sigma_d)\Big).
\end{align}
\end{claim}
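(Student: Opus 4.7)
The plan is to observe that the claim follows almost immediately from the Gaussian integral computation that precedes it; the only work is to apply $\tilde{\phi}$ componentwise and recognize the resulting product as $p(\sigma_j)$.

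First I would recall the identity already derived in the preceding display, namely
\begin{align*}
    \int \psi_{\sigma}\Big(\tfrac{1}{2}v^t \nabla^2 L(x) v\Big) d\mu(v) = \prod_{i=1}^d \frac{1}{\sqrt{1-\sigma\lambda_i}},
\end{align*}
valid for every $\sigma \in I$, where the assumption $\sigma \in I$ guarantees $1-\sigma\lambda_i>0$ for all $i$ and hence that each factor in the product is a positive real number. In particular, for each $j \in [d]$, substituting $\sigma = \sigma_j$ gives a well-defined positive value for the $j$-th coordinate of the vector to which $\tilde{\phi}$ is applied.

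Next I would apply $\tilde{\phi}(t_1,\ldots,t_d) = (t_1^{-2},\ldots,t_d^{-2})$ componentwise. For the $j$-th coordinate,
\begin{align*}
    \Bigg(\prod_{i=1}^d \frac{1}{\sqrt{1-\sigma_j\lambda_i}}\Bigg)^{-2} = \prod_{i=1}^d (1-\sigma_j\lambda_i) = p(\sigma_j),
\end{align*}
by the definition of $p$. Collecting the $d$ coordinates yields exactly $\big(p(\sigma_1),p(\sigma_2),\ldots,p(\sigma_d)\big)$, which is the claim.

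I do not anticipate any real obstacle here: the Gaussian integral has already been carried out in the display above the claim, the diagonalization via the orthogonal matrix $Q$ has already been justified, and the only remaining task is algebraic (squaring the reciprocal and simplifying). The one point that deserves a sentence of care is the range of $\sigma_j$: I would note explicitly that the definition of $I$ as $(-\epsilon,\epsilon)$ with $\epsilon = \min_{(\lambda_1,\ldots,\lambda_d)\in\calA}\min_i |\lambda_i|^{-1}$ ensures $1-\sigma_j\lambda_i>0$ uniformly in $x\in\calA$, so the square root, the reciprocal, and hence $\tilde{\phi}$ are all well-defined at the relevant points. After that remark, the claim follows by direct substitution.
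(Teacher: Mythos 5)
Your proposal is correct and follows exactly the paper's route: the paper likewise derives the Gaussian integral identity $\int \psi_{\sigma}\big(\tfrac{1}{2}v^t\nabla^2L(x)v\big)\,d\mu(v)=\prod_{i=1}^d(1-\sigma\lambda_i)^{-1/2}$ for $\sigma\in I$ and then notes the claim ``simply follows,'' which is precisely the componentwise application of $\tilde{\phi}$ that you carry out. Your extra remark on why $\sigma_j\in I$ keeps every factor $1-\sigma_j\lambda_i$ positive is a welcome clarification but not a departure from the paper's argument.
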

The above claim simply follows from the integral we calculated before.

Now we are ready to complete the proof. Choose arbitrary non-zero distinct $\sigma_i\in I, i\in [d]$, and note that having access to $\big(p(\sigma_1), p(\sigma_2), \ldots, p(\sigma_d)\big)$ is enough to recover all the eigenvalues. Indeed, assume  that $p(x) = p_0 + p_1x + p_2x^2 +\ldots + p_d x^d$ and note that $p(0)=1$ by definition. Let also $V(0, \sigma_1,\sigma_2,\ldots, \sigma_d) \in \R^{d \times d}$ denote a Vandermonde matrix of order $d+1$, which is provably invertible by definition, and note that
\begin{align}
    \big(p(\sigma_1), p(\sigma_2), &\ldots, p(\sigma_d)\big)^t = V(0, \sigma_1,\sigma_2,\ldots, \sigma_d)   \times \big(p_0, p_1, \ldots, p_{d+1}\big)^t \\
    &\implies \big(p_0, p_1, \ldots, p_{d+1}\big)^t =  V(0, \sigma_1,\sigma_2,\ldots, \sigma_d)^{-1} \times \big(p(\sigma_1), p(\sigma_2), \ldots, p(\sigma_d)\big)^t. 
\end{align}
Indeed, this shows that having access to the vector $\big(p(\sigma_1), p(\sigma_2), \ldots, p(\sigma_d)\big)^t$ is enough to reconstruct the polynomial $p(x) = p_0 + p_1x + \ldots + p_d x^d$. Having access to this polynomial is equivalent to having access to its roots, so one can find a continuous function $\phi_1: \R^d \to \R^d$ such that \begin{align*}
    \big(&\lambda_1,\lambda_2, \ldots,\lambda_d\big)^t \\
    &= \phi_1 \circ \tilde{\phi}\Big (
    \int  \psi_{\sigma_1} \big(\frac{1}{2} v^t \nabla^2L(x) v \big) d\mu(v),
    \int  \psi_{\sigma_2} \big( \frac{1}{2}v^t \nabla^2L(x) v \big) d\mu(v),
    \ldots,
    \int  \psi_{\sigma_d} \big( \frac{1}{2}v^t \nabla^2L(x) v \big) d\mu(v)
    \Big).
\end{align*}
Since the sharpness measure $S(\lambda_1,\lambda_2,\ldots, \lambda_d)$ is a continuous function of its coordinates, we conclude that 
\begin{align*}
     &S(\lambda_1,\lambda_2, \ldots,\lambda_d\big) \\
    &= S \circ \phi_1 \circ \tilde{\phi}\Big (
    \int  \psi_{\sigma_1} \big(\frac{1}{2} v^t \nabla^2L(x) v \big) d\mu(v),
    \int  \psi_{\sigma_2} \big( \frac{1}{2}v^t \nabla^2L(x) v \big) d\mu(v),
    \ldots,
    \int  \psi_{\sigma_d} \big( \frac{1}{2}v^t \nabla^2L(x) v \big) d\mu(v)
    \Big). 
\end{align*}
Now to complete the proof, we define a continuous function $\psi:\R \to \R^d$ as $\psi = \big(\psi_{\sigma_1}, \psi_{\sigma_2}, \ldots, \psi_{\sigma_d}\big)^t$, and a continuous function $\phi: \R^d \to \R$ as $ \phi= S \circ \phi_1 \circ \tilde{\phi}$, and observe that $S(\lambda_1,\lambda_2\ldots,\lambda_d) = S(x;\phi,\psi,\mu)$ for any $x \in \calA$. This completes the proof.
  \end{proof}

\section{Proof of Theorem \ref{theoremuniveigenfunction}}\label{proof_theoremuniveigenfunction}

\theoremuniveigenfunction*

\begin{proof}
We explicitly construct a set of functions/probability measures to achieve the desired representation. Indeed, let's take $m = d(d+1)/2$ and consider the following Dirac measures:  $\mu_i = \delta_{e_i}, i \in [d]$, and also $\mu_{ij} = \delta_{e_i+e_j}$, for any $i,j \in [d]$ such that $i < j$. Here,  $e_i$ denotes the unit vector in the $i$th coordinate in $\R^d$.  Now, note that we have 
\begin{align}
   (\nabla^2L(x))_{i,i} =  \int    v^t \nabla^2L(x) v  d\mu_i(v) 
\end{align}
for any $i \in [d]$, and 
\begin{align}
   2(\nabla^2L(x))_{i,j} + (\nabla^2L(x))_{i,i} + (\nabla^2L(x))_{j,j} =  \int    v^t \nabla^2L(x) v  d\mu_{i,j}(v),
\end{align}
for any $i,j \in [d]$ such that $i < j$.  The above system of linear equations has clearly a unique solution, as the Hessian matrix is symmetric. This means that, similar to the proof of Theorem \ref{theoremuniv}, one can find continuous functions $\psi: \R \to \R^m$ and $\phi: \R^m \to R$, along with $m$ constructed probability measures such that $S(\nabla^2 L(x)) = S(x;\phi,\psi,\mu)$ for any $x \in \R^d$, where  $\mu = \mu_1\otimes \mu_2 \otimes \ldots \otimes \mu_m$ is a product probability measure.
\end{proof}

\section{Proof of Theorem \ref{theorembias}}\label{proof_theorembias}

\theorembias*

\begin{proof}

For simplicity, we assume that $m=1$. The general proof for $m>1$ follows with a similar argument to this special case. Define an open set $U \subseteq \R^d$ as follows: 
\begin{align}
    U: = \big\{ x \in \R^d: \|\nabla L(x)\|_2 < \rho^2 \big\}. 
\end{align}
 Note that this set contain the zero-loss manifold, i.e., $\Gamma \subseteq U$. We study the behavior of the loss function on this open set.

Let us denote the sharpness term in the loss function $L_{(\phi,\psi,\mu)}(x)$ by $R_{\rho}(x)$:
\begin{align}
 L_{(\phi,\psi,\mu)}(x)&= L(x) + \rho^2 R_{\rho}(x) 
 \\& = L(x) 
    +   \rho^2 \phi \Big( \int  \psi \Big( 
    \frac{1}{\rho^2} \big( L(x+\rho v)- L(x) \big) 
    \Big) d\mu(v)\Big).
\end{align}
We first study the convergence of $R_{\rho}(x)$ to the corresponding sharpness measure  $S(x;\phi,\psi,\mu)$. Fix any point $x \in U$ and  note that using Taylor's  theorem for the function $L:\R^d \to \R$ and for any $v \in \R^d$, one has
\begin{align}
    \frac{1}{\rho^2} \big( L(x+\rho v)- L(x) \big)  &= \frac{1}{\rho^2} \Big( \rho \langle  \nabla L(x), v \rangle  + \frac{1}{2} \rho^2 v^t \nabla^2 L(x) v + O_{x}(\rho^3 \|v\|_2^3 \times \|v\|_2^{-1})\Big)\\
    & =   \rho^{-1} \langle  \nabla L(x), v \rangle  +    \frac{1}{2} v^t \nabla^2 L(x) v + O_{x}(\rho \|v\|_2^2 \times \|v\|_2^{-1}),
\end{align}
where in above we used the fact that $L(x)$ is third-order continuously differentiable and its third-order derivative satisfies the estimate
\begin{align}
    \max_{i,j,k \in \{1,2,3\}} |\partial_i \partial_j \partial_k L(v)|  = O(\| v\|^{-1}),
    \end{align}
    for $v \in \R^d$ as $\| v\|_2 \to \infty$. 

Note that using the assumption $x \in U$, we have that 
\begin{align}
    |\rho^{-1} \langle  \nabla L(x), v \rangle| \le  \rho^{-1} \|\nabla L(x)\|_2 \|v\|_2 <\rho \|v\|_2. 
\end{align}
Thus, we have 
\begin{align}
    \frac{1}{\rho^2} \big( L(x+\rho v)- L(x) \big)  =   \frac{1}{2}v^t \nabla^2 L(x) v + O_{x}(\rho (\|v\|_2^2 + \|v\|_2)). 
\end{align}
Note that we study the above approximation only for $x \in U$, and for small enough $\rho$, we know that $U$ is a precompact set. Therefore, we drop the dependence on $x$ in the error term above.

Now using the above approximation, we have 
\begin{align}
     \int  \psi \Big( 
    \frac{1}{\rho^2} \big( L(x+\rho v)- L(x) \big) 
    \Big) d\mu(v) = \int  \psi \Big( 
       \frac{1}{2}v^t \nabla^2 L(x) v + O(\rho (\|v\|_2^2 + \|v\|_2))
    \Big) d\mu(v).
\end{align}
Let us use Taylor's theorem for the function $\psi$ and write
\begin{align}
    \psi \Big( 
       \frac{1}{2}v^t \nabla^2 L(x) v + O(\rho (\|v\|_2^2 + \|v\|_2))
    \Big) = \psi\big(\frac{1}{2}v^t \nabla^2 L(x) v\big) + \rho \times O(\tilde{\psi}(v) (\|v\|_2^2 + \|v\|_2)),
\end{align}
where 
\begin{align}
    \tilde{\psi}(v) := \max_{|t|\le \|v\|_2} |\psi'(Ct^2)|,
\end{align}
and $C$ is a constant, and it's big enough to absorb the quadratic growth of $\nabla^2L(x)$; i.e., 
\begin{align}
    C > \max_{x \in \mathcal{X}}\max_{i \in [d]} |\lambda_{i}(\nabla^2 L(x))|. 
\end{align}
Therefore, we conclude that
\begin{align*}
     \int  \psi \Big( 
    \frac{1}{\rho^2} \big( L(x+\rho v)- L(x) \big) 
    \Big) d\mu(v) &= \int  \psi \big( \frac{1}{2}
       v^t \nabla^2 L(x) v \big ) d\mu(v)+ \rho \times  O\Big(\int \tilde{\psi}(v) (\|v\|_2^2 + \|v\|_2)
    d\mu(v) \Big)\\
    & = \int  \psi \big( 
       \frac{1}{2} v^t \nabla^2 L(x) v \big ) d\mu(v)+ O(\rho ),
\end{align*}
by the assumption. This allows us to conclude that
\begin{align}
    R_{\rho}(x) &= \phi \Big( \int  \psi \Big( 
    \frac{1}{\rho^2} \big( L(x+\rho v)- L(x) \big) 
    \Big) d\mu(v)\Big)\\
    & = \phi \Big( \int  \psi \big( 
       \frac{1}{2} v^t \nabla^2 L(x) v \big )  d\mu(v)\Big) + O(\rho)\\
    & = S(x;\phi,\psi,\mu) + O(\rho),
\end{align}
again, by assuming that 
\begin{align}
    \max_{x \in \mathcal{X}}\phi' \Big( \int  \psi \big( 
       \frac{1}{2} v^t \nabla^2 L(x) v \big ) d\mu(v)\big) < \infty,  
\end{align}
which holds by the compactness of $\mathcal{X}$, and also using $U \subseteq \mathcal{X}$.

Now according to the assumption, for some $u \in U$, we have
\begin{align}
    L(u) + \rho^2 R_{\rho}(u) - \inf_{x \in U}\Big (
    L(x) + \rho^2 R_{\rho}(x)
    \Big) \le \Delta \rho^2, 
\end{align}
for some  optimally gap $\Delta$. Using the following proven approximation
\begin{align}
    R_{\rho}(x) = S(x;\phi,\psi,\mu) + O(\rho), 
\end{align}
we conclude that
\begin{align}
    L(u) + \rho^2 S(u;\phi,\psi,\mu) - \inf_{x \in U}\Big (
    L(x) + \rho^2 S(x;\phi,\psi,\mu)
    \Big) \le (\Delta + O(\rho)) \rho^2. 
\end{align}

Now by proof by contradiction, assume that 
\begin{align}
    L(u) \ge \inf_{x \in U} L(x) + (\Delta + \delta) \rho^2,
\end{align}
for some strictly positive $\delta$, as $\rho \to 0^+$. Note that $ \inf_{x \in U} L(x) = 0$ as $\Gamma \subseteq U$. Thus, we can conclude that
\begin{align}
    \rho^2 S(u;\phi,\psi,\mu) +(\Delta + \delta) \rho^2 &\le L(u) + \rho^2 S(u;\phi,\psi,\mu) \\
    &\le \inf_{x \in U}\Big (
    L(x) + \rho^2 S(x;\phi,\psi,\mu)
    \Big)+ (\Delta + O(\rho)) \rho^2\\
    &\le \rho^2 \inf_{x \in \Gamma}S(x;\phi,\psi,\mu) + (\Delta + O(\rho)) \rho^2,
\end{align}
since $\Gamma \subseteq U$. This shows that
\begin{align}
     S(u;\phi,\psi,\mu) \le  \inf_{x \in \Gamma}S(x;\phi,\psi,\mu) - \delta +O(\rho). 
\end{align}
This must hold for as $\rho \to 0^+$. However, as $\rho \to 0^+$, we have that $U_{\rho} \to \Gamma$. This means that 
\begin{align}
     S(u;\phi,\psi,\mu) \le  \inf_{x \in \Gamma}S(x;\phi,\psi,\mu) - \delta,
\end{align}
for some $u \in \Gamma$, which is a contradiction. This shows that
\begin{align}
    L(u) \le \inf_{x \in U} L(x) + (\Delta + o(1)) \rho^2.
\end{align}

Also, to prove the next part of the theorem, for any $u \in U$ satisfying the assumptions, similarly we can show 
\begin{align}
    \rho^2 S(u;\phi,\psi,\mu) &\le L(u) + \rho^2 S(u;\phi,\psi,\mu)  \\
    & = \inf_{x \in U}\Big (
    L(x) + \rho^2 S(x;\phi,\psi,\mu)
    \Big) +(\Delta + O(\rho)) \rho^2\\
    & \le \rho^2 \inf_{x \in \Gamma} S(x;\phi,\psi,\mu) +(\Delta + O(\rho))\rho^2,
\end{align}
which implies that
\begin{align}
    S(u;\phi,\psi,\mu) \le \inf_{x \in \Gamma} S(x;\phi,\psi,\mu) + (\Delta + O(\rho)).
\end{align}
The proof is thus complete.

\end{proof}

\section{Proof of Theorem \ref{theoremscale}}\label{proof_theoremscale}

\theoremscale*

\begin{proof}

Let $D \in \mathbb{R}^{d \times d}$ be an arbitrary diagonal matrix. Then, 
\begin{align}
    S(Dx;\phi,\psi,\mu)=\phi \Big( \int  \psi \big(\frac{1}{2} v^t \nabla^2L(x)\Big|_{Dx} v \big) d\mu(v)\Big).
\end{align}
But note that by the assumption
\begin{align}
    L(x) = L(Dx) \implies \nabla^2L(x) = D^t \nabla^2L(x)\Big |_{Dx} D.
\end{align}
Therefore, 
\begin{align}
    S(Dx;\phi,\psi,\mu)=\phi \Big( \int  \psi \big( \frac{1}{2}v^t D^{-1}\nabla^2L(x) D^{-1}v \big) d\mu(v)\Big).
\end{align}
Now define a new variable $u:=D^{-1}v$. Then,  
\begin{align}
    d\mu(v) &= f(\prod_{i=1}^dv_i)\prod_{i=1}^d dv_i = f(\prod_{i=1}^d D_{i,i} \prod_{i=1}^d u_i)\prod_{i=1}^d D_{i,i} \prod_{i=1}^d du_i\\
    & = f(\det(D) \prod_{i=1}^d u_i)\det(D) \prod_{i=1}^d du_i \\
    & = f(\prod_{i=1}^du_i)\prod_{i=1}^d du_i. 
\end{align}
Therefore, we conclude that 
\begin{align}
    S(Dx;\phi,\psi,\mu)=\phi \Big( \int  \psi \big( \frac{1}{2}u^t \nabla^2L(x) u \big) d\mu(u)\Big) = S(x;\phi,\psi,\mu),
\end{align}
and this completes the proof.

\end{proof}

\begin{lemma}\label{lemma:scale_invatiant_measures}
    For any scale-invariant measure $\mu$ that is absolutely continuous with respect to the Lebesgue measure on $\R^d$, one has
    \[d \mu(x) = f\big(\prod_{i = 1}^{d} x_i\big) \prod_{i = 1}^{d} d x_i.
    \]
\end{lemma}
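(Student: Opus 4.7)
The plan is to translate scale-invariance of $\mu$ into a functional equation for its Radon--Nikodym density and then to show, via a change of coordinates adapted to the group action, that this density factors through the map $h(x) := \prod_{i=1}^d x_i$.

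First I would invoke the Radon--Nikodym theorem to write $d\mu(x) = g(x)\prod_{i=1}^d dx_i$ for some measurable $g \ge 0$. Scale-invariance of $\mu$ means $\mu(DA) = \mu(A)$ for every Borel $A$ and every diagonal $D$ with $\det D = 1$. Performing the change of variables $y = D^{-1}x$ on one side and using $|\det D| = 1$ gives, for each such $D$,
\begin{align}
g(Dx) = g(x) \quad \text{for a.e. } x \in \mathbb{R}^d.
\end{align}
Since the exceptional set $\{x : \prod x_i = 0\}$ is Lebesgue-null, it suffices to establish the factorization on the open full-measure set $X := \{x \in \R^d : x_i \ne 0 \text{ for all } i\}$.

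The key geometric observation is that the group $G := \{D \text{ diagonal} : \det D = 1\}$ acts on $X$ with orbits exactly equal to the level sets of $h$: if $h(x) = h(y) = c \ne 0$, then $D := \mathrm{diag}(y_i/x_i)$ lies in $G$ and satisfies $Dx = y$. Next I would introduce the diffeomorphism
\begin{align}
\Phi: \R^{*} \times (\R^{*})^{d-1} \to X, \qquad \Phi(t, u_2,\ldots,u_d) = \bigl(t/(u_2\cdots u_d),\, u_2,\ldots,u_d\bigr),
\end{align}
in which the $G$-action becomes the free, transitive action of $(\R^{*})^{d-1}$ on the last factor alone: $D \cdot \Phi(t,u) = \Phi(t, d_2 u_2, \ldots, d_d u_d)$, while $t = h(x)$ is preserved. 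Setting $\tilde g := g \circ \Phi$, invariance becomes $\tilde g(t, d u) = \tilde g(t, u)$ a.e. for each $d \in (\R^{*})^{d-1}$.

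Finally, I would conclude that $\tilde g$ depends only on $t$ a.e., which would yield $g(x) = f(h(x))$ a.e.\ for a measurable $f$, as required. I expect this last step to be the main obstacle because the invariance hypothesis holds only off a null set that may depend on $d$, so I cannot simply fix one point $u^{*}$ and define $f(t) := \tilde g(t, u^{*})$. The standard way around this is to pass to logarithmic coordinates $v_i = \log|u_i|$, $\sigma_i = \operatorname{sign}(u_i)$, turning the continuous multiplicative action into translation on $\R^{d-1}$ combined with a finite sign-flipping action of a subgroup of $(\Z/2)^{d-1}$. Invariance under a countable dense subgroup of translations (which fails only on a single null set, by countable union) then forces $\tilde g(t, v, \sigma)$ to be invariant under all translations in $v$ by the $L^1$-continuity of translation (or by a Lebesgue-density argument applied to the level sets), while invariance under the finite sign subgroup is automatic from the original hypothesis. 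Combining these yields $\tilde g(t, u) = f(t)$ a.e., completing the proof.
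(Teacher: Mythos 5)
Your proposal is correct and takes essentially the same route as the paper: both arguments exploit that the determinant-one diagonal group acts transitively on each level set of $\prod_{i=1}^d x_i$ (over the full-measure set where no coordinate vanishes), so an invariant density must factor through that product. The paper's own proof is a two-line version that simply moves $x$ to the canonical orbit representative $(1,\dots,1,\prod_i x_i)$ and equates densities pointwise; your additional care with the $D$-dependent null sets (countable dense subgroup plus $L^1$-continuity of translation in logarithmic coordinates) patches a measure-theoretic step that the paper's argument silently glosses over.
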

\begin{proof}
Any measure $\mu$ that is absolutely continuous with respect to the Lebesgue measure  on $\R^d$ can be written as
\[ 
d \mu(x_1, x_2, \dots, x_d) = \tilde{f} (x_1, x_2, \dots, x_d) \prod_{i = 1}^{d} dx_i. 
\]
For any non-zero choice of $x_1, x_2, \dots, x_d$, by scale invariance property of $d \mu$,
\begin{align*} 
& d \mu(x_1, x_2, \dots, x_d) = d \mu(1, 1, \dots, 1, \prod_{i = 1}^d x_i) \\ \Rightarrow & \tilde{f} (x_1, x_2, \dots, x_d) \prod_{i = 1}^{d} dx_i = \tilde{f} (1, 1, \dots, 1, \prod_{i = 1}^d x_i) \prod_{i = 1}^{d} dx_i \\ \Rightarrow & \tilde{f} (x_1, x_2, \dots, x_d) = \tilde{f} (1, 1, \dots, 1, \prod_{i = 1}^d x_i). 
\end{align*} 

Now, it suffices to choose $f(\prod_{i = 1}^{d} x_i) := \tilde{f} (1, 1, \dots, 1, \prod_{i = 1}^d x_i)$ almost surely and the proof is concluded.
\end{proof}

\section{Proof of Theorem \ref{theoremgroup}}\label{proof_theoremgroup}

\theoremgroup*

\begin{proof}

We start by evaluating $S(A_g x;\phi,\psi,\mu)$.
\begin{align}
    S(Dx;\phi,\psi,\mu)=\phi \Big( \int  \psi \big(\frac{1}{2} v^t \nabla^2L(x)\Big|_{A_g x} v \big) d\mu(v)\Big).
\end{align}
But again here, note that by the assumption
\begin{align}
    L(x) = L(A_g x) \implies \nabla^2L(x) = A_g ^t \nabla^2L(x)\Big |_{A_g x} A_g.
\end{align}
Therefore, 
\begin{align}
    S(A_g x;\phi,\psi,\mu)=\phi \Big( \int  \psi \big( \frac{1}{2}v^t A_g^{-1}\nabla^2L(x) A_g^{-1}v \big) d\mu(v)\Big).
\end{align}
Now define a new variable $u:=A_g^{-1}v$. Therefore, we conclude that 
\begin{align}
    S(A_g x;\phi,\psi,\mu)=\phi \Big( \int  \psi \big( \frac{1}{2}u^t \nabla^2L(x) u \big) d\mu(u)\Big) = S(x;\phi,\psi,\mu),
\end{align}
and this completes the proof.

\end{proof}

\section{$(\phi, \psi, \mu)$-Sharpness-Aware Minimization Algorithm}

To propose an algorithm for the general case (i.e., arbitrary $m$), we compute the gradient of 
\begin{align}
        \tilde{R}_{\rho}(x) \coloneqq \phi \Big(&  
        \frac{1}{n} \sum_{i = 1}^{n}  \psi_1 \Big( \frac{1}{\rho^2} \big( L(x+\rho v_{i,1})- L(x) \big) \Big),\\
        &\frac{1}{n} \sum_{i = 1}^{n}  \psi_2 \Big( \frac{1}{\rho^2} \big( L(x+\rho v_{i,2})- L(x) \big) \Big),\\
&\hspace{2cm}\ldots\\
& \frac{1}{n} \sum_{i = 1}^{n}  \psi_m \Big( \frac{1}{\rho^2} \big( L(x+\rho v_{i,m})- L(x) \big) \Big) \Big ),
\end{align}
where $v_{i,\ell} \overset{\text{i.i.d.}}{\sim} \mu_{\ell}$ for each $\ell \in [m]$.  Note that $\psi = (\psi_1,\psi_2,\ldots,\psi_m)^t$ for some scalar functions $\psi_{\ell}$, $\ell \in [m]$. Let $\partial_{\ell}\phi$ denote partial derivatives of the function $\phi: \R^m \to \R$, for any $\ell \in [m]$. Then,  
\begin{align}
    \rho^2 \nabla \tilde{R}_{\rho}(x) & = \sum_{\ell = 1}^m \partial_{\ell}\phi\Big(\sum_{i=1}^n \frac{1}{n}\psi_{\ell} \Big(\frac{1}{\rho^2} \big(L(x_t+\rho v_{i,\ell}) - L(x_t)\big)\Big) \Big) \\
      & \hspace{1cm}\times  \sum_{i=1}^n \frac{1}{n} \Big \{
    \psi_\ell'\Big(\frac{1}{\rho^2} \big(L(x_t + \rho v_{i,\ell}) - L(x_t)\big) \Big) 
    \times \Big(\nabla L(x_t + \rho v_{i,\ell}) - \nabla L(x_t)\Big) \Big \},
\end{align}
and this leads to \cref{algo-gen}.

\begin{algorithm*}[!t]
   \caption{  $(\phi, \psi, \mu)$-Sharpness-Aware Minimization Algorithm (with arbitrary $m$)}
\begin{algorithmic}\label{algo-gen}
   \STATE {\bfseries Input:} The triplet $(\phi, \psi, \mu)$, training loss $L(x)$, step size $\eta$, perturbation parameter $\rho$, number of samples $n$, 
   \STATE {\bfseries Output:} Model parameters $x_t$ trained with $(\phi, \psi, \mu)$-sharpness-aware minimization algorithm
   \STATE Initialization: $x \gets x_0$ and  $t \gets 0$
   \WHILE{1}
   \STATE Sample $v_{i,\ell} \overset{\text{i.i.d.}}{\sim} \mu_{\ell}$, for any $i \in [n]$ and $\ell \in [m]$
    \STATE Compute the following: \vspace{-0.5cm}\begin{align*}
       g_t &= \nabla L(x_t) +  \sum_{\ell = 1}^m \partial_{\ell}\phi\Big(\sum_{i=1}^n \frac{1}{n}\psi_{\ell} \Big(\frac{1}{\rho^2} \big(L(x_t+\rho v_{i,\ell}) - L(x_t)\big)\Big) \Big) \\
      & \hspace{2cm} \times  \sum_{i=1}^n \frac{1}{n} \Big \{
    \psi_\ell'\Big(\frac{1}{\rho^2} \big(L(x_t + \rho v_{i,\ell}) - L(x_t)\big) \Big) 
    \times \Big(\nabla L(x_t + \rho v_{i,\ell}) - \nabla L(x_t)\Big) \Big \}.
    \end{align*} \vspace{-0.5cm}
    \STATE Update the parameters: $x_{t+1} = x_t - \eta g_t$
    \STATE $t \gets t+1$
   \ENDWHILE
   \end{algorithmic}
\end{algorithm*}

\section{Experiments}
\label{sec:full_experiments}

\subsection{Experimental Details}
We now describe experimental details that were omitted from the main text.

For CIFAR10 and CIFAR100, we apply random crops and random horizontal flips. We use a momentum term of 0.9 for all datasets and a weight decay of 5e-4 for CIFAR10 and SVHN and 1e-3 for CIFAR100. We use batch size 128 and train for 200 epochs. We use a multi-step schedule where the learning rate is initially 0.1 and decays by a multiplicative factor of 0.1 every 50 epochs. We run each experiment with four different random seeds to assess statistical significance. We use 1280 training examples and 100 noise samples to estimate the Frobenius norm and trace via Hessian-vector products.
We set $\rho$ to 1.0 for Det-SAM, 0.01 for Trace-SAM, and sweep it in \{0.005, 0.01\} for Frob-SAM. For Det-SAM and Trace-SAM we sweep $\lambda$ in \{0.01, 0.1, 1.0\} and set $n=1$. For Frob-SAM, we sweep $\lambda$ in \{0.0001, 0.001, 0.005, 0.01, 0.05, 0.1\} and set $n=2$. The hyper-parameters selected for each setting is given in \cref{table:hparams_lam} and \cref{table:hparams_rho}.

We use the PyHessian library~\citep{yao2020pyhessian} to estimate the trace of the Hessian. Adapting this library, we estimate the Frobenius norm (squared) as $\frac{1}{k}\sum_{i=1}^k \|H z_i\|_2^2$, where $H$ is the Hessian matrix and $z_i \sim \mathcal{N}(0, I_d)$.

\begin{table}[!h]
\centering
\small
\setlength{\tabcolsep}{2pt}
\begin{tabular}{r||ccc|ccc|ccc}
\multicolumn{1}{l||}{} & CIFAR10 & CIFAR100 & SVHN  & CIFAR10-S & CIFAR100-S & SVHN-S & CIFAR10-C & CIFAR100-C & SVHN-C \\ \hline \hline
Frob-SAM              & 5e-3   & 1e-4    & 5e-3 & 1e-4     & 1e-4      & 1e-3  & 0.01     & 0.01      & 1e-3  \\
Trace-SAM             & 1       & 0.1      & 0.1   & 0.01      & 0.01       & 1      & 1         & 0.1        & 0.01   \\
Det-SAM               & 1       & 0.1      & 0.1   & 0.01      & 1          & 1      & 0.01      & 0.1        & 0.1    \\ \hline
\end{tabular}
\caption{Choice of regularizer weight $\lambda$ for different datasets.}
\label{table:hparams_lam}
\end{table}

\begin{table}[!h]
\centering
\small
\setlength{\tabcolsep}{2pt}
\begin{tabular}{l||ccc|ccc|ccc}
                                       & CIFAR10 & CIFAR100 & SVHN & CIFAR10-S & CIFAR100-S & SVHN-S & CIFAR10-C & CIFAR100-C & SVHN-C \\ \hline \hline
\multicolumn{1}{r||}{Frob-SAM} & 5e-3    & 5e-3     & 5e-3 & 1e-2      & 5e-3       & 5e-3   & 1e-2      & 5e-3       & 1e-2   \\ \hline
\end{tabular}
\caption{Choice of $\rho$ for Frob-SAM for different datasets.}
\label{table:hparams_rho}
\end{table}

\subsection{Additional Plots}
All training plots are shown in \cref{fig:accuracy_plots}.

\begin{figure*}[!h]
    \centering
    \includegraphics[width=0.3\textwidth]{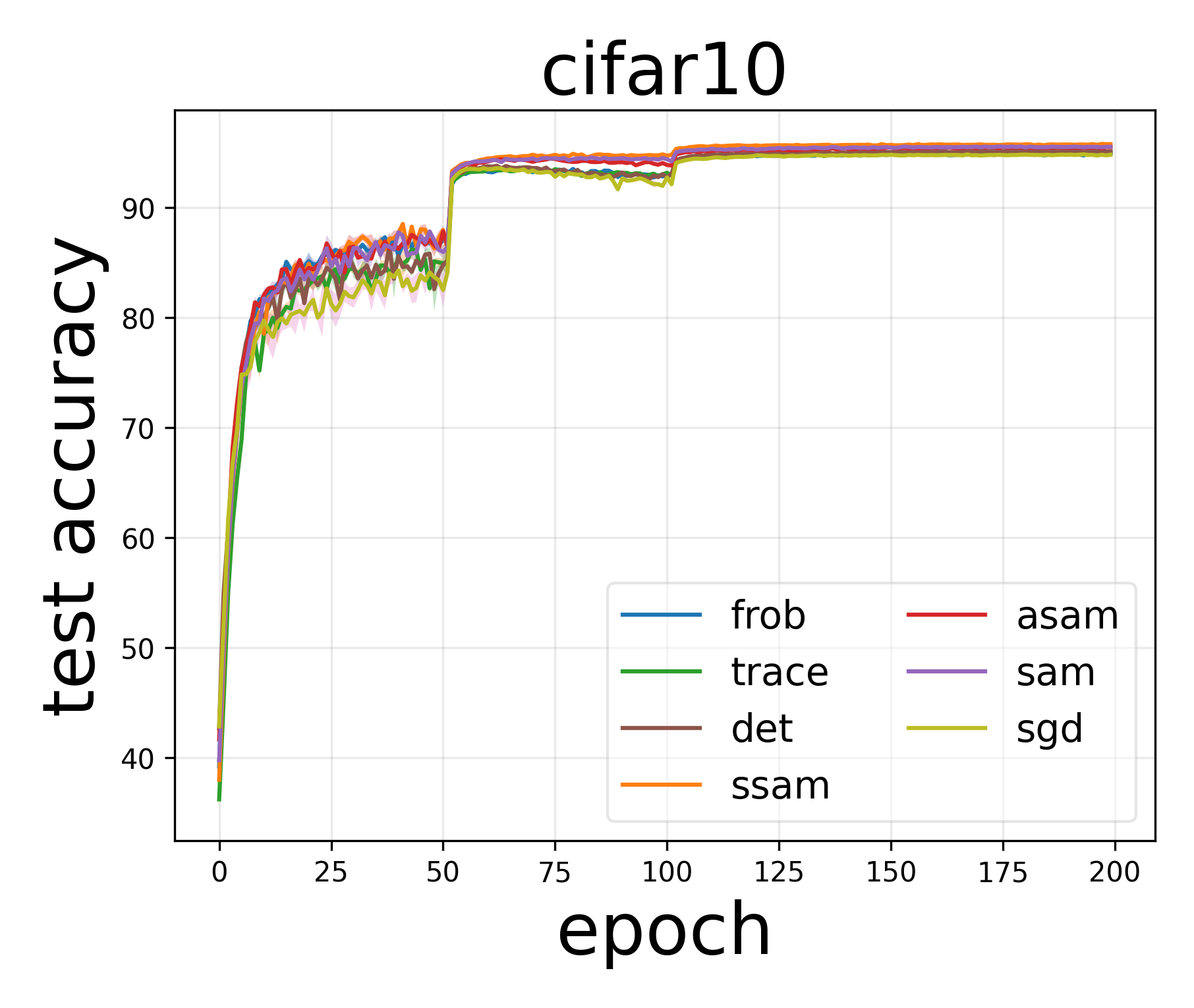}
    \includegraphics[width=0.3\textwidth]{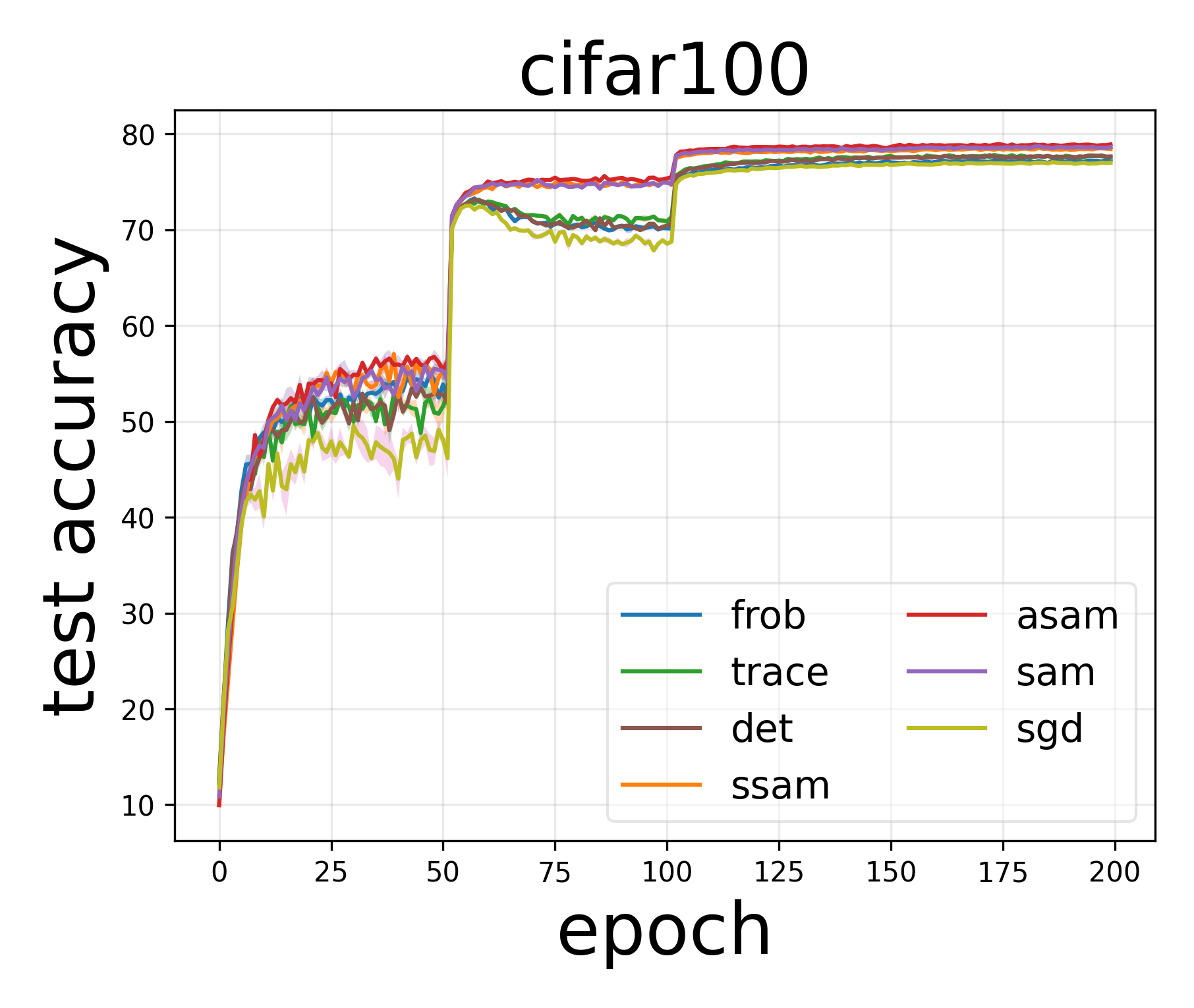}
    \includegraphics[width=0.3\textwidth]{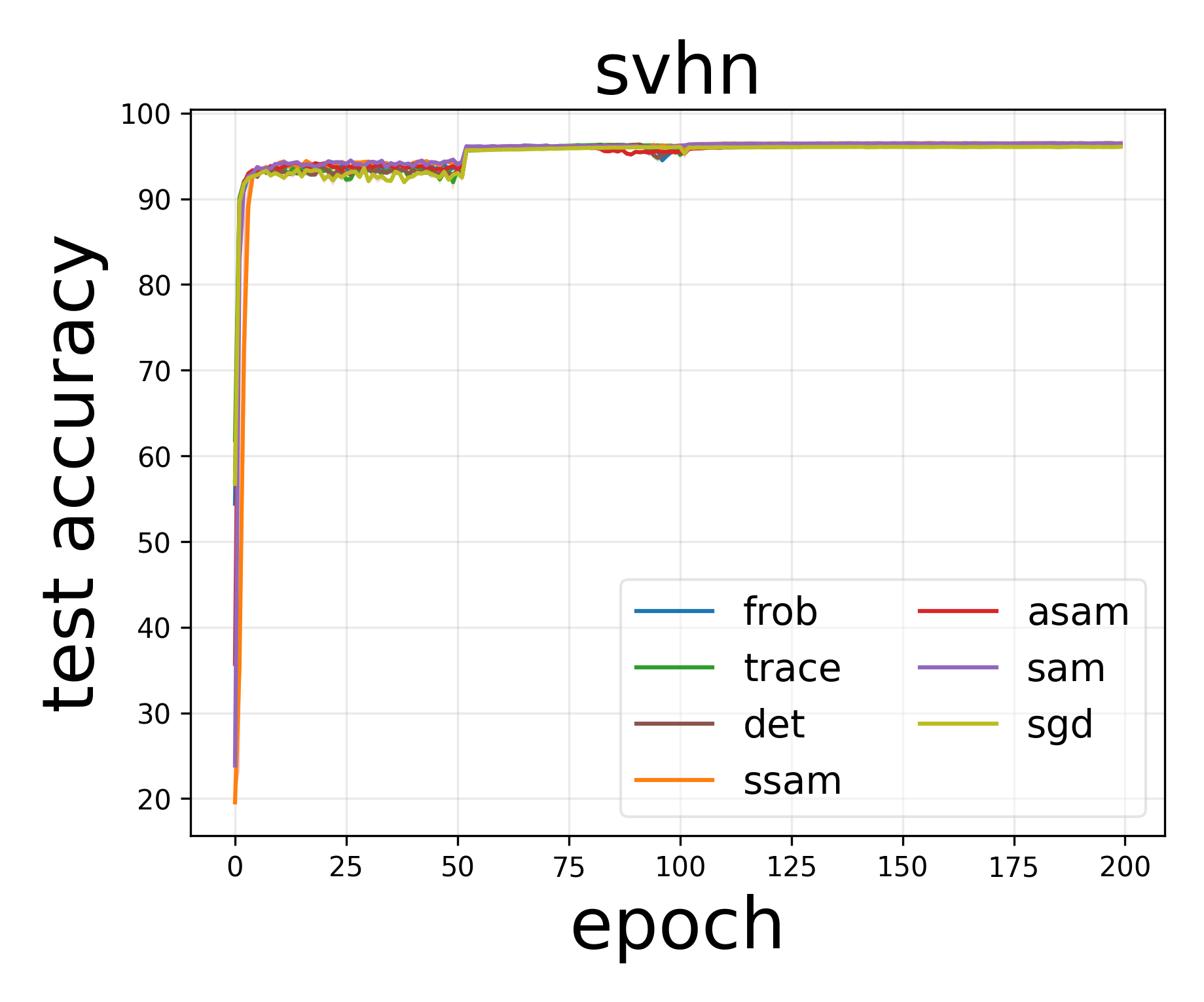}
    \\
    \includegraphics[width=0.3\textwidth]{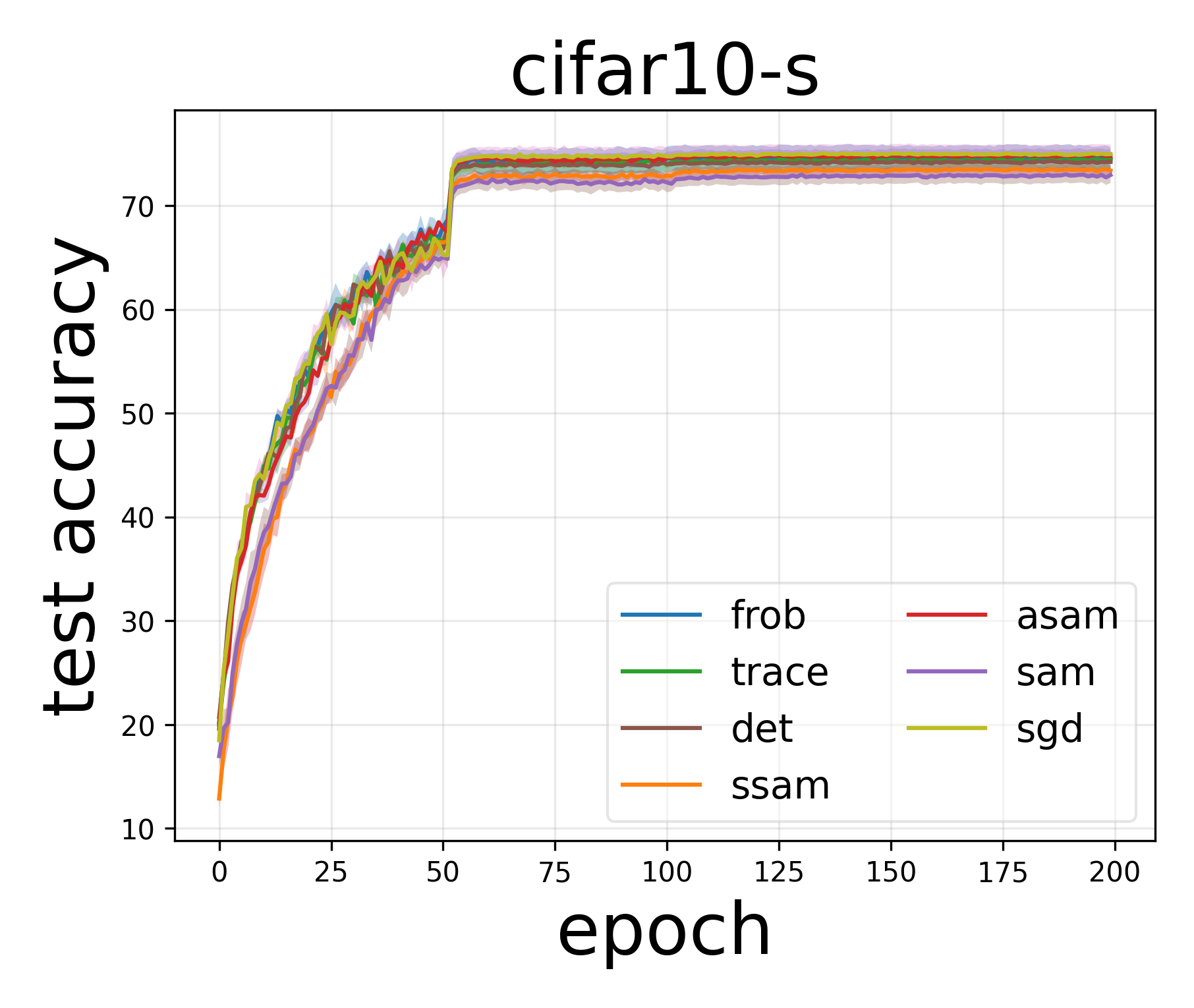}
    \includegraphics[width=0.3\textwidth]{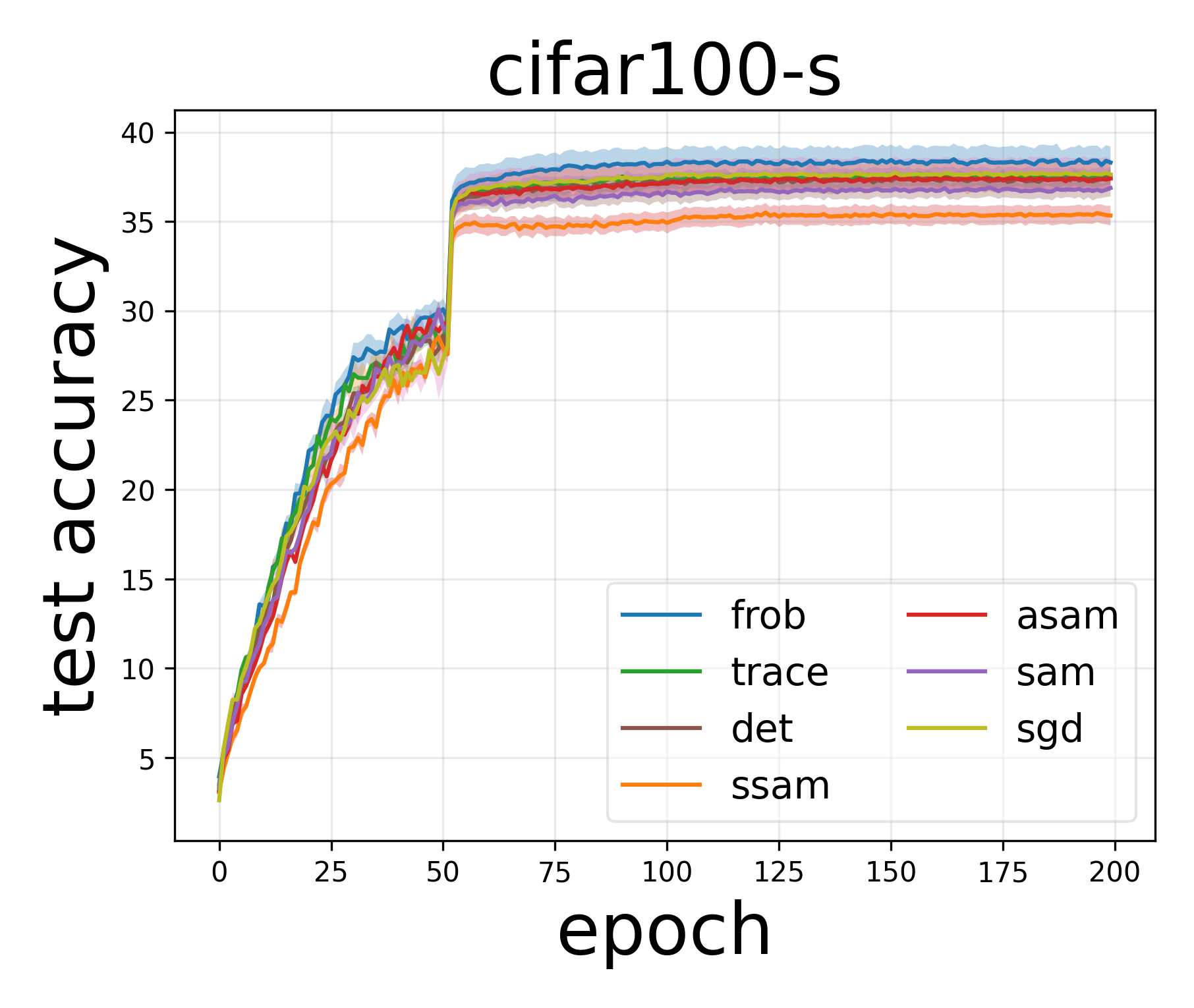}
    \includegraphics[width=0.3\textwidth]{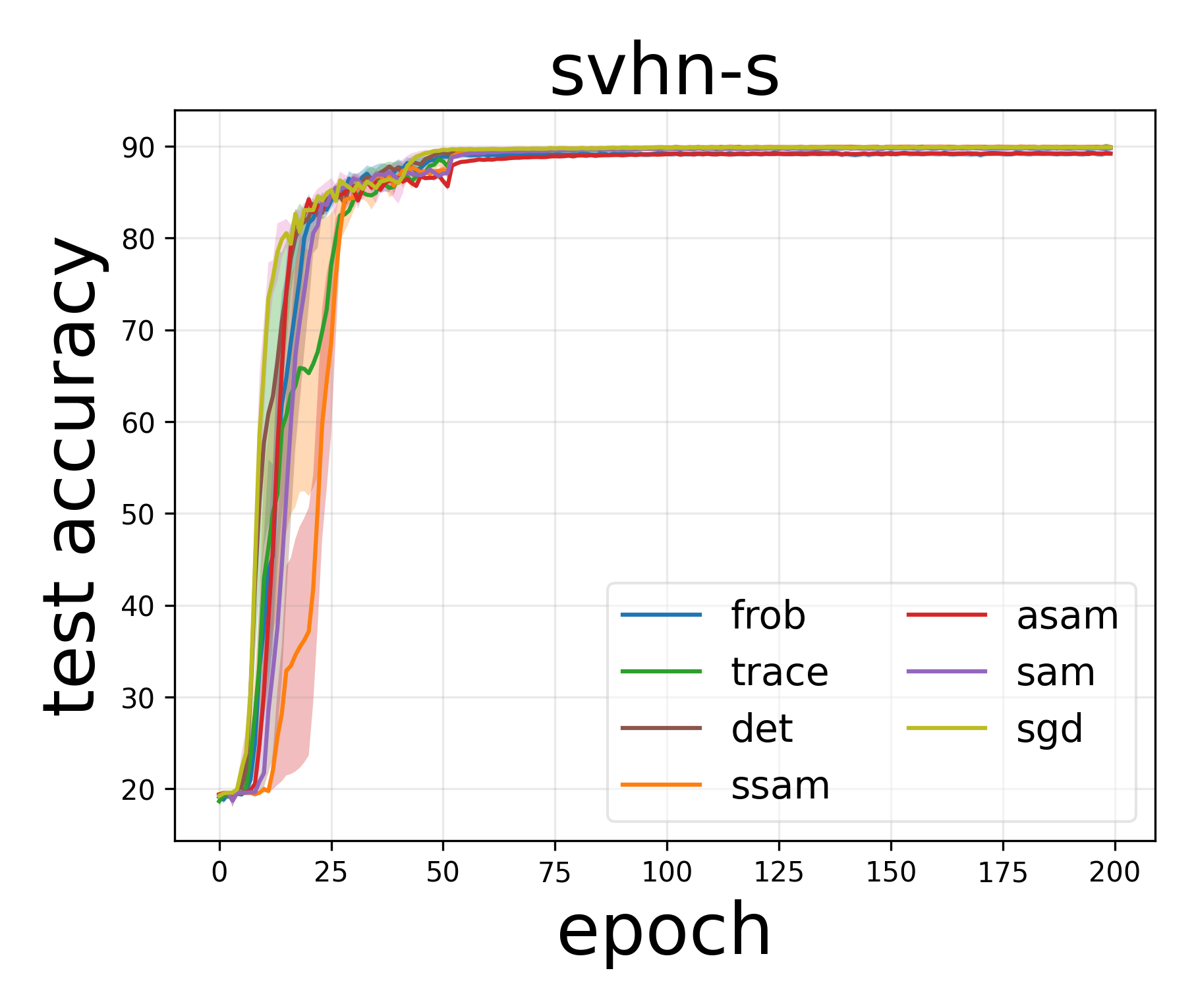}
    \\
    \includegraphics[width=0.3\textwidth]{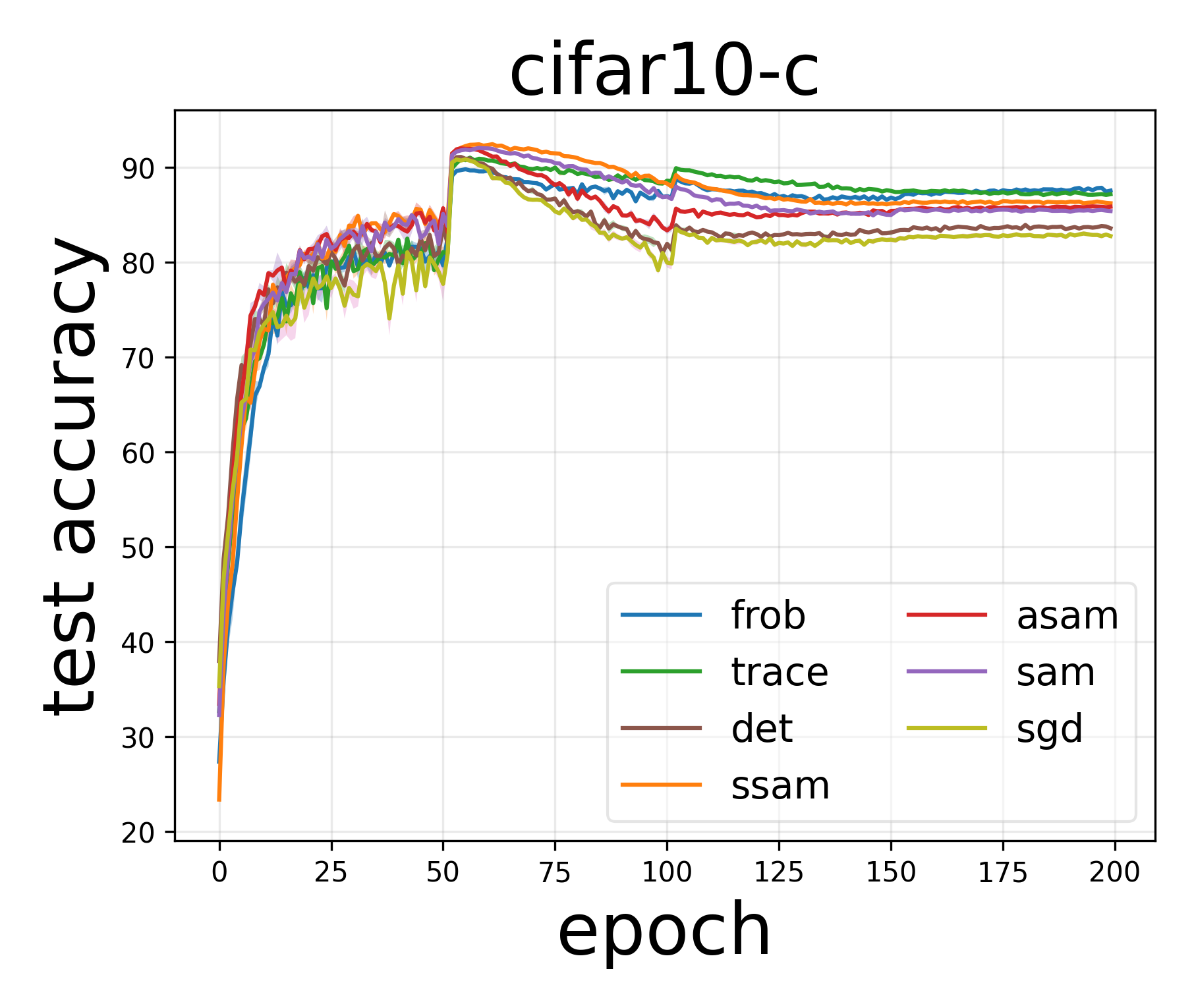}
    \includegraphics[width=0.3\textwidth]{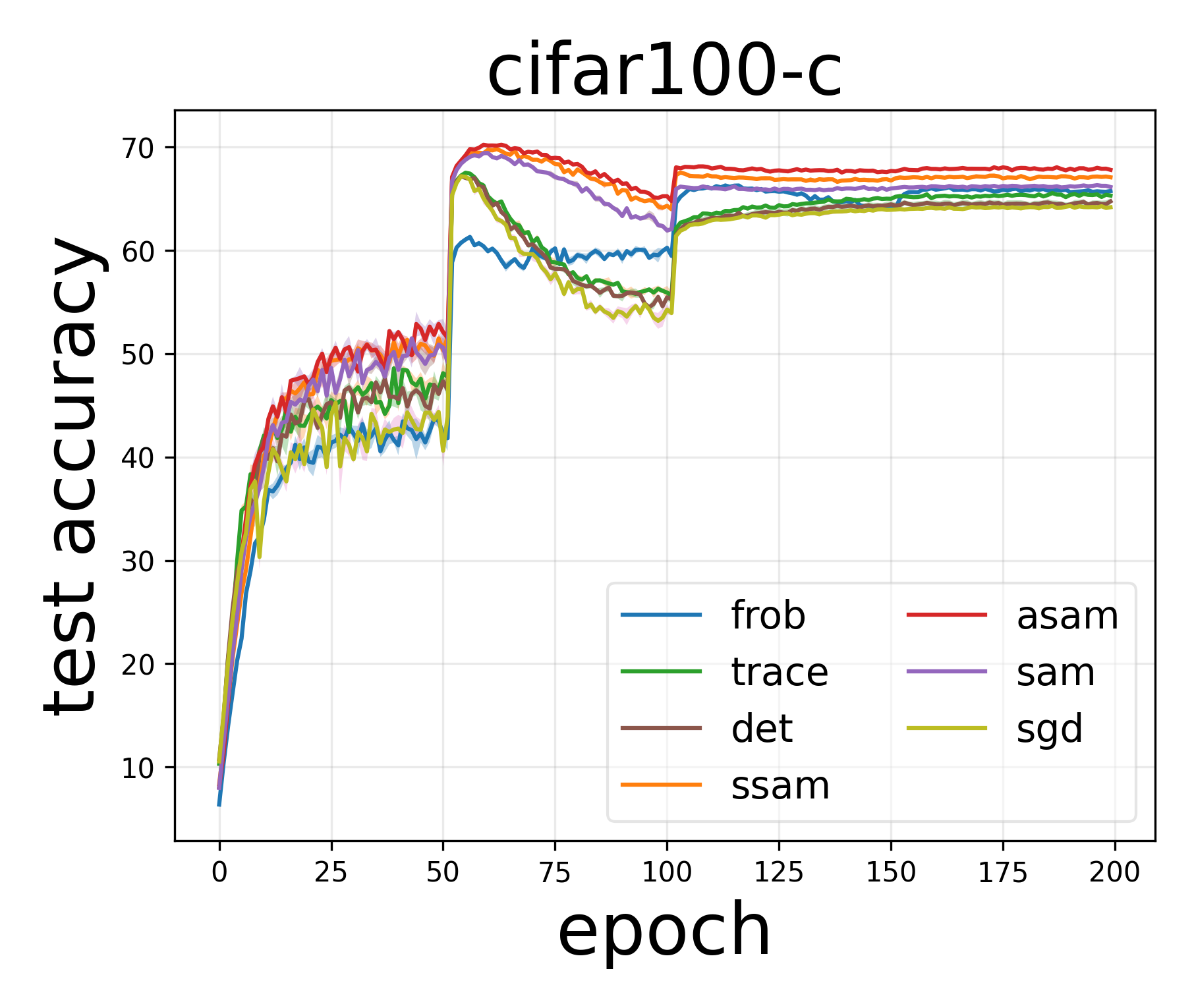}
    \includegraphics[width=0.3\textwidth]{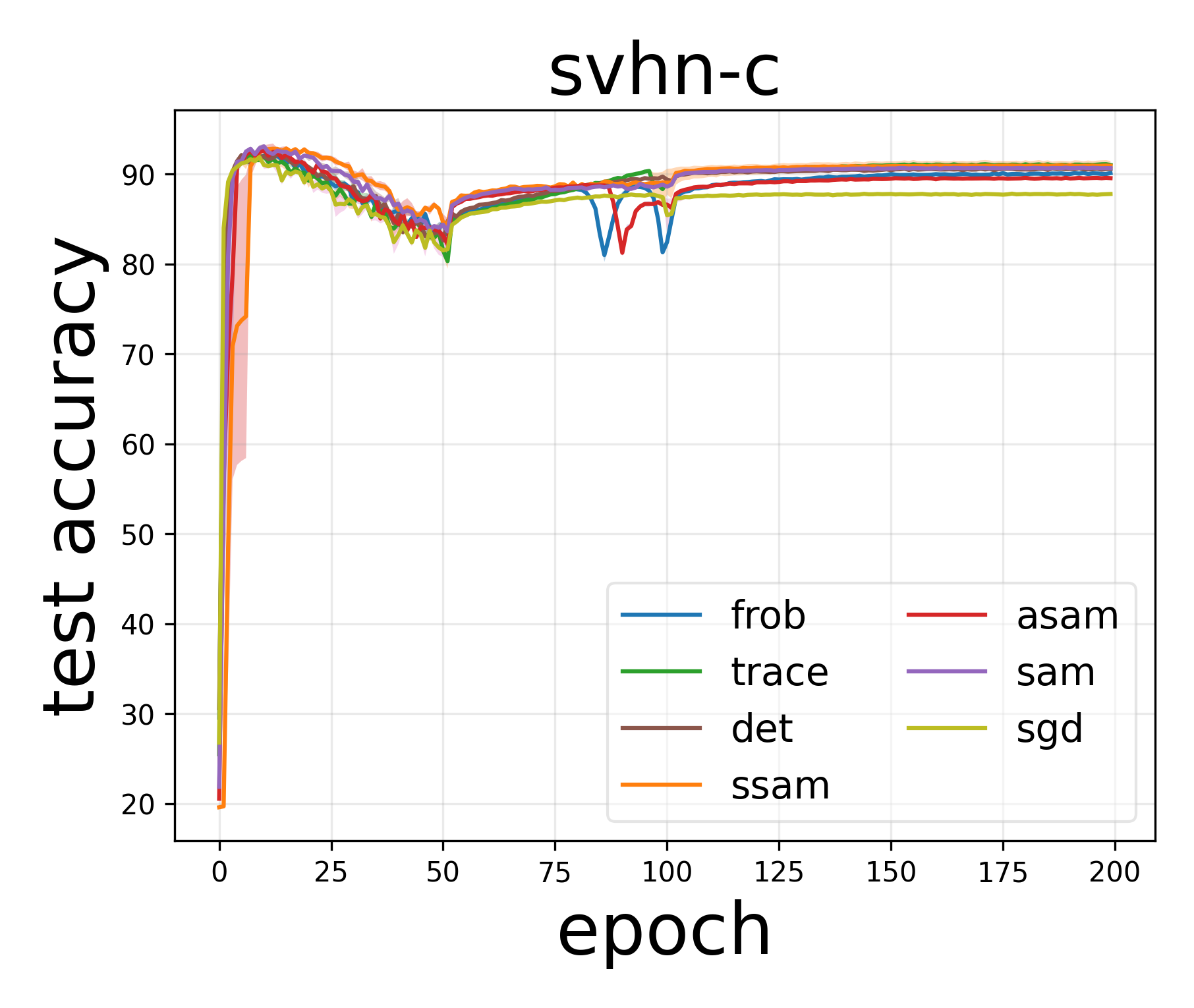}
    \caption{Training plots for all settings. One standard error is shaded.}
    \label{fig:accuracy_plots}
\end{figure*}

\end{document}